\def\argmax{\mathop{\rm argmax}}
\def\argmin{\mathop{\rm argmin}}
\newtheorem{theorem}{Theorem}
\newtheorem{corollary}[theorem]{Corollary}
\newtheorem{proposition}[theorem]{Proposition}
\newtheorem{remark}{Remark}
\title{Predictive Power of Nearest Neighbors Algorithm under Random Perturbation}
\author{Yue Xing\\
Department of Statistics\\
Purdue University\\
West Lafayette, IN 47907, USA \\
\And
Qifan Song\\
Department of Statistics\\
Purdue University\\
West Lafayette, IN 47907, USA \\
\AND
Guang Cheng\\
Department of Statistics\\
Purdue University\\
West Lafayette, IN 47907, USA \\
}
\begin{document}
\maketitle
	\begin{abstract}
		We consider a data corruption scenario in the classical $k$ Nearest Neighbors ($k$-NN) algorithm, that is, the testing data are randomly perturbed. Under such a scenario, the impact of corruption level on the asymptotic regret is carefully characterized. In particular, our theoretical analysis reveals a phase transition phenomenon that, when the corruption level $\omega$ is below a critical order (i.e., small-$\omega$ regime), the asymptotic regret remains the same; when it is beyond that order (i.e., large-$\omega$ regime), the asymptotic regret deteriorates polynomially. Surprisingly, we obtain a negative result that the classical noise-injection approach will not help improve the testing performance in the beginning stage of the large-$\omega$ regime, even in the level of the multiplicative constant of asymptotic regret. As a technical by-product, we prove that under different model assumptions, the pre-processed 1-NN proposed in \cite{xue2017achieving} will at most achieve a sub-optimal rate when the data dimension $d>4$ even if $k$ is chosen optimally in the pre-processing step. 
	\end{abstract}
	\section{Introduction}
While there has been a great deal of success in the development of machine learning algorithms, much of the success is in relatively restricted domains with limited structural variation or few system constraints. Those algorithms would be quite fragile in broader real-world scenarios, especially when the testing data are contaminated. For example, in image classification, when the input data are slightly altered due to a minor optical sensor system malfunction, a deep neural network may yield a totally different classification result \cite{adversarial2014}; more seriously, attackers can feed well-designed malicious adversarial input to the system and induce wrong decision making \cite{DolphinAttack2017, papernot2017practical}. One strand of existing research along this line focuses on methodology development including generation of corrupted testing samples for which machine learning algorithms fail \cite{papernot2016limitations,papernot2016crafting,grosse2017adversarial} and
design of robust training algorithms \cite{goodfellowexplaining,kurakin2016adversarial,sinha2018certifying, madry2017towards}. The other strand of research focuses on theoretical investigation on how the data corruption affects the algorithm performance \cite{wang2017analyzing,yang2019adversarial,fawzi2016robustness,fawzi2018adversarial}.

		
This work aims to study the robustness of $k$ Nearest Neighbors ($k$-NN) algorithm and its variants, from a theoretical perspective. There is a rich literature on the same topic, but under different setups. 
For example, \citet{cannings2018classification,reeve2019fast,reeve2019classification} study the case where labels for training data are contaminated, and investigate the overall excess risk of trained classifier; \citet{wang2017analyzing,yang2019adversarial} study the case where testing data are contaminated, and only concern about the testing robustness when testing data belong to certain subset of support, rather than the whole support. In contrast to these existing works, the presented study aims to address a different question:
how the {\it overall} regret of $k$-NN classifier, which is trained by uncontaminated training data, is affected when the testing features are corrupted by random perturbation?  
	

Our main theoretical result (derived in the framework of \citealp{ samworth2012optimal})  characterizes the asymptotic regret for randomly perturbed testing data (with an explicit form of multiplicative constant) of $k$-NN with respect to the choice of $k$ and the level of testing data corruption. There are several interesting implications. First, there exists a critical contamination level, (a) below which the asymptotic order of regret is not affected; (b) above which the asymptotic order of regret deteriorates polynomially. Second, although the regret of $k$-NN deteriorates polynomially with respect to the corruption level, it actually achieves the best possible accuracy for testing randomly perturbed data (under a fine tuned choice of $k$). Hence $k$-NN classifier is rate-minimax for both clean data testing task \cite{audibert2007fast,samworth2012optimal,cannings2017local} and randomly perturbed data testing task.

Similar as adversarial training, a strategy to robustify learning algorithm is to inject the same random noises to training data. However, our theoretical analysis reveals that such a noise injection approach doesn't improve the performance of $k$-NN algorithm in the beginning stage of the polynomial deterioration regime, in the sense that it leads to exactly the same asymptotic regret.

Adapting the analysis for adversarial attack scenario (i.e., testing data are contaminated with worst-case perturbation), we compare the regret of $k$-NN under adversarial attack and random perturbation. It is quite interesting to find that when the level of data contamination is beyond the aforementioned critical order, adversarial attack leads to a worse regret than random perturbation {\em only} in terms of the multiplicative constant of the convergence rate. 

		
Our developed theory may also be used to evaluate the asymptotic performance of variants of $k$-NN algorithms. For example, \citet{xue2017achieving} applied $1$NN to pre-processed data (relabelled by $k$-NN) in order to achieve the same accuracy as $k$-NN. Interestingly, this algorithm can be translated into the classical $k$-NN algorithm under a type of perturbed samples to which our theory naturally applies. In particular, we prove that the above algorithm, under our model assumption framework,  only obtain a sub-optimal rate (at least worse than $k$-NN) of regret when $d>4$. 

As far as we are aware, the only related work in the context of $k$-NN is \cite{wang2017analyzing} that evaluated the adversarial robustness of $k$-NN based on the concept of ``robust radius'', 
which is the maximum allowed attack intensity that doesn't affect the testing performance. 
However, their analysis ignores the area nearby the decision boundary where mis-classification is most likely to occur. By filling these gaps, our work attempts to present more comprehensive regret analysis on robustness of $k$-NN. 


		
	\section{Effect of Random Perturbation on $k$-NN}
	In this section, we formally introduce the model setup and present our main theorems which characterize the asymptotic regret for randomly perturbed testing samples. 
    
	\subsection{Model Setup}\label{sec:assumption}
	  
	Denote $P(Y=1|X=x)$ as $\eta(x)$, and its $k$-NN estimator as $\widehat{\eta}_{k,n}(x)$, an average of $k$ nearest neighbors among $n$ training samples. The corresponding Bayes classifier and $k$-NN classifier is defined as $g(x)=1\{\eta(x)>1/2 \}$ and $\widehat{g}_{n,k}(x)=1\{ \widehat{\eta}_{k,n}(x)>1/2 \}$, respectively. 
	
	Define $\omega$ as the level of perturbation. For any intended testing data $x$, we only observe its randomly perturbed version:
	\begin{equation}\label{def:noise}
	\widetilde{x} \sim \mbox{Unif}(\partial B(x,\omega)),
	\end{equation}
	that is, $\widetilde{x}$ is uniformly distributed over $\partial B(x,w)$, the boundary of an Euclidean ball centered at $x$ with radius $\omega$. 

In this case, we define the so called ``perturbed'' regret as
	\begin{eqnarray*}
	\mbox{Regret}(k,n,\omega)=P(Y\neq \widehat{g}_{n,k}(\widetilde X))-P(Y\neq g(X)),
	\end{eqnarray*}
	and $$\mbox{Regret}(n,\omega)=\min\limits_{k=1,...,n}\mbox{Regret}(k,n,\omega).$$
	Note that the $k$-NN classifier $\widehat g_{n,k}$ is trained by uncontaminated training samples. Obviously, when $\omega=0$, the above definition reduces to the common regret that used in statistical classification literature.

The following assumptions are imposed on $X$ and the underlying $\eta$, to facilitate our theoretical analysis.
		\begin{enumerate}
			\item [A.1] $X$ is a random variable on a compact $d$-dimensional manifold $\mathcal X$ with boundary $\partial \mathcal{X}$.  Density function of $X$ is differentiable, finite and bounded away from 0.
			\item[A.2] The set $\mathcal{S}=\{ x|\eta(x)=1/2 \}$ is non-empty. There exists an open subset $U_0$ in $\mathbb{R}^d$ which contains $\mathcal{S}$ such that, define $U$ as an open set containing $\mathcal{X}$, then $\eta$ is continuous on $U\backslash U_0$.
			\item [A.3] There exists some constant $c_x>0$ such that when $|\eta(x)-1/2|\leq c_x$, $\eta$ has bounded fourth-order derivative; when $\eta(x)=1/2$, $\dot{\eta}(x)\neq 0$, where $\dot{\eta}$ is the gradient of $\eta$ in $x$. Also the derivative of $\eta(x)$ within restriction on the boundary of support is non-zero. 
		\end{enumerate}
	 Assumptions A.1 ensures that for any $x\in\mathcal X$, all its $k$ nearest neighbors are close to $x$ with high probability. This is due to the fact that if the density at a point $x$ is positive and finite, its distance to its $k$th nearest will be of $O_p((k/n)^{1/d})=o_p(1)$. 
	 Assumption A.2 ensures the existence of $x$ in $\{x\in\mathcal{X}|\; \eta(x)=1/2\}$ and $\eta(x)$ is continuous in other regions of $\mathcal{X}$. Assumption A.3 on $\eta(x)$ is slightly stronger than that imposed in \cite{samworth2012optimal} due to the consideration of testing data contamination. Specifically, the additional smoothness on $\eta(x)$ imposed in Assumption A.3 guarantees that some higher-order terms in the Taylor expansion of $\mathbb{E}\{\widehat{\eta}_{k,n}(\widetilde{x})-\eta(x)\}$ are negligible.
	
	\subsection{Asymptotic Regret and Phase Transition Phenomenon }\label{sec:comparison}
	
We are now ready to conduct regret analysis for $k$-NN in the presence of randomly perturbed testing examples. For any $x\in\mathcal X$, define  
\[
 t(x)= \mathbb{E}\big(\|X_i-x\|_2^2\, \big|\,X_i \mbox{ is among the $k$ nearest neighbors of } x\big).
\]
Therefore, $t(x)$ represents the expected squared distance from $x$ to any of its $k$ nearest neighbors.
And take $t=\max_x t(x)$. Also denote $\bar{f}(x,y)$ and $\bar f(x)$ as the joint density of $(x,y)$ and marginal density of $x$ respectively. Let $f_1(x):=\bar{f}(x,0)$, $f_2(x):=\bar{f}(x,1)$, and $\Psi(x):=f_1(x)-f_2(x)$.

We first characterize the asymptotic perturbed regret.

\begin{theorem}\label{thm:noise}
Define $\epsilon_{k,n,\omega}=\max(\log k/\sqrt{k},t+\omega)$. Under [A.1] to [A.3] if testing data is randomly perturbed, 
		then it follows that
		\begin{equation}\label{eqn:noise}
		\begin{split}
		\mbox{Regret}(k,n,\omega)=&\underbrace{\frac{1}{2}\int_{\mathcal{S}}\frac{\|\dot{\Psi}(x_0)\|}{\|\dot{\eta}(x_0)\|^2}\left(b(x_0)t(x_0)\right)^2d\text{Vol}^{d-1}(x_0)}_{Bias} +\underbrace{\frac{1}{2}\int_{\mathcal{S}}\frac{\omega^2}{d}\|\dot{\Psi}(x_0)\| d\text{Vol}^{d-1}(x_0)}_{Corruption}\\&+\underbrace{\frac{1}{2}\int_{\mathcal{S}}\frac{1}{4k}\frac{\|\dot{\Psi}(x_0)\|}{\|\dot{\eta}(x_0)\|^2}d\text{Vol}^{d-1}(x_0)}_{Variance}+\text{Remainder},
		\end{split}
		\end{equation}
		where Remainder=$O(\epsilon_{k,n,\omega}^3)$ as $k, n\to\infty$. The term $b(\cdot)$ relies on the true $\eta(x)$ and the distribution of $X$, and does not change with respect to $k$ and $n$:
		 \begin{eqnarray*}
		 	b(x)&=&\frac{1}{\bar{f}(x)d}\left\{ \sum_{j=1}^d [\dot{\eta}_j(x)\dot{\bar{f}}_j(x)+\ddot{\eta}_{j,j}(x)\bar{f}(x)/2 ] \right\}.
		 \end{eqnarray*} 
Here $\dot{\eta}$, $\ddot{\eta}$, and $\dot{\bar{f}}$ represent the gradient, Hessian of $\eta$, and gradient of $\bar{f}$ respectively. The subscript $j$ denotes the $j$'th element of $\dot{\eta}$ or $\dot{\bar{f}}$, as well as subscript $j,j$ denotes the $(j,j)$'th element of $\ddot{\eta}$.
	\end{theorem}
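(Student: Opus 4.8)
The plan is to follow the boundary--localization scheme of \cite{samworth2012optimal}, carrying the perturbation $\widetilde X$ through an extra layer of expectation. Conditioning on $X=x$ and using that, given $X$, the label $Y$ and the perturbation $\widetilde X$ are independent, the standard conditional--risk identity gives
\[
\mbox{Regret}(k,n,\omega)=\int_{\mathcal X}|2\eta(x)-1|\;\mathbb{P}\!\left(\widehat g_{n,k}(\widetilde X)\neq g(x)\mid X=x\right)\bar f(x)\,dx,
\]
the inner probability being over the training sample and over $\widetilde X\sim\mbox{Unif}(\partial B(x,\omega))$. I would first discard the region far from the boundary: on $\{x:\operatorname{dist}(x,\mathcal S)>\delta\}$, $|2\eta(\widetilde x)-1|$ stays bounded away from $0$ uniformly once $\omega=o(1)$ (using [A.2], so the perturbation cannot move a point into the boundary zone), and a Bernstein bound for $\widehat\eta_{k,n}(\widetilde x)=k^{-1}\sum_{i=1}^k Y_{(i)}$ conditional on $\widetilde x$ makes the misclassification probability there exponentially small in $k$; this contributes $o(\epsilon_{k,n,\omega}^3)$, just as for clean data.

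On the tube $\{\operatorname{dist}(x,\mathcal S)\le\delta\}$ I would pass to Fermi (normal) coordinates $x=x_0+s\,\nu(x_0)$, $x_0\in\mathcal S$ and $\nu$ the unit normal, so that by the co-area formula the integral becomes $\int_{\mathcal S}\int_{-\delta}^{\delta}|2\eta(x_0+s\nu)-1|\,\mathbb{P}(\widehat g_{n,k}(\widetilde X)\neq g(x_0+s\nu))\,(\bar f(x_0)+O(s))\,ds\,d\text{Vol}^{d-1}(x_0)$, and use [A.2]--[A.3] to Taylor expand around $x_0$; in particular $2\eta(x_0+s\nu)-1=2s\|\dot\eta(x_0)\|+O(s^2)$, so that up to the $O(s^2)$ term the sign of $\eta(x)-\tfrac12$ is that of $s$. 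Set $R=R(s):=\widehat\eta_{k,n}(\widetilde X)-\tfrac12-\|\dot\eta(x_0)\|s$; then the misclassification event equals, to leading order, $\{\|\dot\eta(x_0)\|s+R\text{ has the sign opposite to }s\}$, and the inner integral reduces, via the layer--cake identity $\int_0^\infty 2r\,\mathbb{P}(|R|>r)\,dr=\mathbb{E}[R^{2}]$ (applied with $R$ treated as $s$--independent), to
\[
\int_{-\delta}^{\delta}|2\eta(x_0+s\nu)-1|\,\mathbb{P}\big(\widehat g_{n,k}(\widetilde X)\neq g(x_0+s\nu)\big)\,ds=\frac{\mathbb{E}[R^{2}]}{\|\dot\eta(x_0)\|}+O(\epsilon_{k,n,\omega}^3),
\]
the remainder collecting the $s$--dependence of $R$, the truncation at $\pm\delta$ and the $O(s^2)$ curvature term.

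It then remains to expand $\mathbb{E}[R^{2}]=\operatorname{Var}(R)+(\mathbb{E}R)^2$. Conditioning on $\widetilde X$ and using the fourth--order smoothness in [A.3] to kill the cubic and quartic Taylor terms of $\mathbb{E}\{\widehat\eta_{k,n}(\widetilde x)-\eta(x)\}$ (the cubic one being $O(t^2)$ by the near--symmetry of the $k$--NN ball), one obtains $\mathbb{E}[\widehat\eta_{k,n}(\widetilde x)\mid\widetilde x]=\eta(\widetilde x)+b(\widetilde x)t(\widetilde x)+O(t^2)$ and $\operatorname{Var}(\widehat\eta_{k,n}(\widetilde x)\mid\widetilde x)=\eta(\widetilde x)(1-\eta(\widetilde x))/k+O(t/k)$. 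Averaging over $\widetilde X=x+\omega U$, using $\mathbb{E}[U]=0$, $\mathbb{E}[UU^{\top}]=d^{-1}I$ and the vanishing of the odd moments of $U$: the spread of $\omega U$ along $\nu$ adds a variance $\|\dot\eta(x_0)\|^2\omega^2/d$, while $\mathbb{E}_U\eta(x+\omega U)-\eta(x)=\tfrac{\omega^2}{2d}\sum_j\ddot\eta_{j,j}(x_0)+O(\omega^4)$ adds a bias. Hence, evaluating at $s=0$, $\operatorname{Var}(R)=\|\dot\eta(x_0)\|^2\omega^2/d+\tfrac1{4k}+O(\epsilon_{k,n,\omega}^3)$ and $\mathbb{E}R=b(x_0)t(x_0)+\tfrac{\omega^2}{2d}\sum_j\ddot\eta_{j,j}(x_0)+O(t^2+\omega t+\omega^4)$, so $(\mathbb{E}R)^2$ equals $(b(x_0)t(x_0)+\tfrac{\omega^2}{2d}\sum_j\ddot\eta_{j,j}(x_0))^2$ up to $O(\epsilon_{k,n,\omega}^3)$. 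Integrating $\mathbb{E}[R^{2}]/\|\dot\eta(x_0)\|$ over $\mathcal S$ and using $\Psi=\bar f(1-2\eta)$, so that $\dot\Psi(x_0)=-2\bar f(x_0)\dot\eta(x_0)$ on $\mathcal S$, whence $\bar f(x_0)/\|\dot\eta(x_0)\|=\tfrac12\|\dot\Psi(x_0)\|/\|\dot\eta(x_0)\|^2$ and $\bar f(x_0)\|\dot\eta(x_0)\|=\tfrac12\|\dot\Psi(x_0)\|$: the $\tfrac1{4k}$ part of $\operatorname{Var}(R)$ becomes the \emph{Variance} term, the $\|\dot\eta(x_0)\|^2\omega^2/d$ part becomes the \emph{Corruption} term, and the $(b(x_0)t(x_0))^2$ part of $(\mathbb{E}R)^2$ becomes the \emph{Bias} term; the residual cross terms (orders $t\omega^2,\omega^4,t^3,t/k,\omega^2/k$), the curvature and Jacobian corrections, the tube truncation and the far--region estimate are all $O(\epsilon_{k,n,\omega}^3)$.

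The step I expect to be the main obstacle is precisely this error accounting. The reduction to $\mathbb{E}[R^{2}]/\|\dot\eta(x_0)\|$ via the layer--cake identity is exact only for $s$--independent $R$; controlling the genuine $s$--dependence of $R$ (the neighbor set of $x_0+s\nu+\omega U$ moves with $s$) and the error from truncating the $s$--integral at $\pm\delta$ calls for a coupling argument together with an anti--concentration / local--CLT bound on $\widehat\eta_{k,n}$, which is why $\log k/\sqrt k$ rather than $k^{-1/2}$ enters $\epsilon_{k,n,\omega}$. Likewise, establishing $\mathbb{E}[\widehat\eta_{k,n}(\widetilde x)\mid\widetilde x]=\eta(\widetilde x)+b(\widetilde x)t(\widetilde x)+O(t^2)$ leans on the extra smoothness of [A.3] to bound the higher Taylor terms arising both from the $k$--NN averaging and from the $\omega$--perturbation. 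Commuting $\mathbb{E}_{\widetilde X}$ past these estimates and checking that $\omega=o(1)$ keeps the tube/bulk split and all sign determinations valid is the genuinely new ingredient relative to the clean--data analysis.
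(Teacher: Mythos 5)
Your proposal follows essentially the same route as the paper's proof: localize the regret to a normal tube around $\mathcal S$, Taylor-expand the conditional mean ($\eta+b\,t_{k,n}+O(t^2)$) and variance ($\tfrac1{4k}+o$) of $\widehat\eta_{k,n}$ at the perturbed point, reduce the normal-direction integral to a second moment, and use $\|\dot\Psi\|=2\bar f\|\dot\eta\|$ on $\mathcal S$ to recover the three displayed terms. The only (equivalent) reorganization is that you fold the perturbation into $\operatorname{Var}(R)$ and apply the layer-cake identity directly, whereas the paper conditions on $\delta$, applies Berry--Esseen plus the Gaussian integral identity of its Proposition 1, and takes $\mathbb{E}_\delta$ last; both yield $\mathbb{E}\big[(b\,t_{k,n}+\delta^{\top}\dot\eta+Z)^2\big]$ with the same constants.
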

Our result (\ref{eqn:noise}) clearly decomposes the asymptotic regret into squared bias term, data corruption effect term, variance term as well as a reminder term due to higher-order Taylor expansion.
The first three terms are of order $O((k/n)^{4/d})$, $O(\omega^2)$ and $O(1/k)$ respectively, and the reminder term is technically derived from high order Taylor expansion and Berry-Essen theorem. When $k$ is within a reasonable range, the reminder term is negligible comparing with the other three terms. 
When $\omega=0$, (\ref{eqn:noise}) reduces to the bias-variance decomposition commonly observed in the nonparametric regression literature. 

Based on Theorem \ref{thm:noise}, through changing $\omega$, we have the following observations:

\paragraph{Phase Transition Phenomenon} We discover a phase transition phenomenon for the regret with respect to the level of testing data contamination in general.


\begin{enumerate}
    \item When $\omega^2\preccurlyeq (1/k\wedge t^2)$
\footnote{To prevent the conflict of definitions of $\omega$, we use $\preccurlyeq$ and $\succcurlyeq$ to replace $o(.)$ and $\omega(.)$ in O/$\Omega$ notation.  Moreover, for $a(n)\preccurlyeq b(n)\preccurlyeq 1$, we mean that $b(n)/1<n^{-\varepsilon_1}$ and $a(n)/b(n)<n^{-\varepsilon_2}$ for some $\varepsilon_1,\varepsilon_2>0$  when $n\rightarrow\infty$.}, the data corruption has almost no effect: $\mbox{Regret}(k,n,\omega)/\mbox{Regret}(k,n,0)\rightarrow 1$;

\item When $\omega^2=\Theta(1/k\vee t^2)$, the regret is of the same order as $\mbox{Regret}(k,n,0)$ but with a different multiplicative constant depending on $\bar{f}$ and $\eta$;

\item When $\omega^2\succcurlyeq (1/k\vee t^2)$, 
	$\mbox{Regret}(k,n,\omega)=\Theta(\omega^2)$ and $ \mbox{Regret}(k,n,\omega)\succcurlyeq\mbox{Regret}(k,n,0)$.

\end{enumerate}	

	

	\paragraph{Impact on $\mbox{Regret}(n,\omega)$ and the choice of $k$}
	The value $k$ plays an important role in the $k$-NN algorithm. It is essential to examine how the intensity level $\omega$ affects the corresponding  $\mbox{Regret}(n,\omega)$. From Theorem \ref{thm:noise}, one can derive that for randomly perturbed testing scenario, if $\omega\preccurlyeq n^{-2/(d+4)}$, $\mbox{Regret}(n,\omega)=\Theta(n^{-4/(d+4)})$; if $\omega\succcurlyeq n^{-2/(d+4)}$, $\mbox{Regret}(n,\omega)=\Theta(\omega^2)$. In other words,  $\mbox{Regret}(n,\omega)=\Theta(\omega^2\vee n^{-4/(d+4)})$.
	The above rate can be achieved if we choose $k=\Theta(n^{4/(4+d)})$ when $\omega\preccurlyeq n^{-4/3(4+d)}$ and $ 1/\omega^2 \preccurlyeq k \preccurlyeq n\omega^{d/2} $ when $\omega \succcurlyeq n^{-4/3(4+d)}$.

	\paragraph{Robustness Trade-off in $k$-NN}
	Theorem \ref{thm:noise} reveals that the regret is of order $O((k/n)^{4/d}+1/k)+O(\omega^2)$, therefore the data corruption has no impact as long as $\omega^2=o((k/n)^{4/d}+1/k)$.
	In other words, if $k$ is chosen to be optimal and minimizes the regret for uncontaminated testing sample, i.e., $(k/n)^{4/d}+1/k$ is small, then the $k$-NN is more sensitive to the increase of $\omega^2$. On the other hands, if $k$ is some sub-optimal choice such that $(k/n)^{4/d}+1/k$ is larger, then $k$-NN is more robust to the testing data corruption. Hence there is a trade-off between the accuracy of uncontaminated testing task and robustness with respect to random perturbation corruption.
	

    \paragraph{Effect of Metric of Noise} Note that $\widetilde{x}$ can be defined on $\mathcal{L}_p$ ball / sphere for different $p\geq 1$. Theorem \ref{thm:noise} reveals that the effect of $\omega$ is independent with $t$ and $1/k$ when $\omega_{k,n,\omega}^3$ is not dominant while $\widetilde{x}$ is uniformly distributed in a ball / sphere (so that first order terms w.r.t. noise are zero). As a result, define $\varepsilon_{p}$ as a random variable uniformly distributed in a $\mathcal{L}_p$ ball/ sphere, then to change the regret accordingly, one can replace $\omega^2/d$ in Theorem \ref{thm:noise} into $\frac{1}{\|\dot{\eta}(x_0)\|^2}\mathbb{E}(\varepsilon_p^{\top}\dot{\eta}(x_0))^2 $.
	
	\paragraph{Minimax Result under General Smoothness Conditions} To relax the strong assumptions A.1 to A.3, we follow \cite{CD14} to impose smoothness and margin conditions. Basically, the observations are similar as the results above. One can also show that $k$-NN reaches the optimal rate of convergence.
	\begin{theorem}[Informal Statement under General Smoothness Conditions]\label{thm:informal}
	If the distribution of $(X,Y)$ satisfies
    \begin{enumerate}
        \item $|\eta(x)-\eta(x')|\leq A\|x-x'\|^{\alpha}$ for all $x$;
        \item $P(|\eta(X)-1/2|<t)\leq Bt^{\beta}$ for some $\beta>0$;
    \end{enumerate}
	    together with some other general assumptions, then when taking $$k\asymp O(n^{2\alpha/(2\alpha+d)}\wedge (n^{2\alpha/d}\omega^{-2\alpha\beta})^{1/(2\alpha/d+\beta+1)}),$$
	    the regret becomes
		\begin{eqnarray*}
		\mbox{Regret}(n,\omega)=O\left( \omega^{\alpha(\beta+1)}\vee n^{-\alpha(\beta+1)/(2\alpha+d)}\right),
		\end{eqnarray*}
		which is proven to be the minimax rate.
	\end{theorem}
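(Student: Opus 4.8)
The plan is to set aside the sharp constant-level expansion behind Theorem~\ref{thm:noise} and instead run a rate-only argument in the style of \cite{CD14,audibert2007fast}, viewing the random perturbation simply as an extra \emph{deterministic} contribution to the pointwise bias of the $k$-NN regression estimate. Writing the excess risk pointwise, $\mbox{Regret}(k,n,\omega)=\mathbb{E}\big[\,|2\eta(X)-1|\cdot 1\{\widehat g_{n,k}(\widetilde X)\neq g(X)\}\,\big]$, the event $\{\widehat g_{n,k}(\widetilde X)\neq g(X)\}$ forces $\widehat\eta_{k,n}(\widetilde X)-\tfrac12$ to have the opposite sign to $\eta(X)-\tfrac12$. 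I would split
\[
\widehat\eta_{k,n}(\widetilde x)-\eta(x)=\underbrace{\big(\eta(\widetilde x)-\eta(x)\big)}_{\text{perturbation bias}}+\underbrace{\big(\mathbb{E}[\widehat\eta_{k,n}(\widetilde x)\mid\text{data}]-\eta(\widetilde x)\big)}_{\text{smoothing bias}}+\underbrace{\big(\widehat\eta_{k,n}(\widetilde x)-\mathbb{E}[\widehat\eta_{k,n}(\widetilde x)\mid\text{data}]\big)}_{\text{stochastic}}.
\]
By the H\"older condition and $\|\widetilde x-x\|_2=\omega$, the first term is at most $A\omega^{\alpha}$; under a strong-density / regularity assumption on $P$ (one of the unstated ``other general assumptions''), the $k$-NN radius at $\widetilde x$ is $\asymp (k/n)^{1/d}$ on a high-probability event, so the smoothing bias is $O((k/n)^{\alpha/d})$; and the third term is a centered average of at most $k$ bounded labels, to which a Chernoff/Bernstein tail applies. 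Set the deterministic bias budget $\zeta\asymp\omega^{\alpha}+(k/n)^{\alpha/d}$.

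Next I would run the usual margin peeling. Partition $\mathcal X$ into the ``uncertain'' region $\{|2\eta(X)-1|\le 2\zeta\}$ and dyadic shells $R_i=\{2^i\zeta<|2\eta(X)-1|\le 2^{i+1}\zeta\}$, $i\ge 0$. On the uncertain region bound the misclassification indicator by $1$ and use the margin condition $P(|2\eta(X)-1|\le t)\le Bt^{\beta}$, giving contribution $O(\zeta^{\beta+1})$. On $R_i$ the margin exceeds twice the total deterministic bias, so a misclassification requires the stochastic term to exceed $\asymp 2^i\zeta$, which by the Chernoff bound has probability $\le \exp(-c\,k\,(2^i\zeta)^2)$; hence $R_i$ contributes $O\big(2^i\zeta\cdot (2^i\zeta)^{\beta}\exp(-c\,k\,(2^i\zeta)^2)\big)$. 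Summing the (essentially geometric) series over the $O(\log n)$ shells, the total is controlled by $\zeta^{\beta+1}$ when $k\zeta^2\gtrsim 1$ and by $k^{-(\beta+1)/2}$ otherwise, so $\mbox{Regret}(k,n,\omega)\lesssim \zeta^{\beta+1}+k^{-(\beta+1)/2}\asymp\max\!\big(\omega^{\alpha},(k/n)^{\alpha/d},k^{-1/2}\big)^{\beta+1}$.

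It then remains to optimize $k$. If $\omega\preccurlyeq n^{-1/(2\alpha+d)}$, choosing $k\asymp n^{2\alpha/(2\alpha+d)}$ balances smoothing bias against variance and yields the rate $n^{-\alpha(\beta+1)/(2\alpha+d)}$. If $\omega\succcurlyeq n^{-1/(2\alpha+d)}$, choosing $k\asymp (n^{2\alpha/d}\omega^{-2\alpha\beta})^{1/(2\alpha/d+\beta+1)}$ --- which, as one checks, lies in the window $\omega^{-2\alpha}\preccurlyeq k\preccurlyeq n\omega^{d}$ --- makes both $(k/n)^{\alpha/d}$ and $k^{-1/2}$ at most $O(\omega^{\alpha})$, so the corruption term dominates and the regret is $O(\omega^{\alpha(\beta+1)})$. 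Taking $k\asymp n^{2\alpha/(2\alpha+d)}\wedge (n^{2\alpha/d}\omega^{-2\alpha\beta})^{1/(2\alpha/d+\beta+1)}$ thus gives $\mbox{Regret}(n,\omega)=O\big(\omega^{\alpha(\beta+1)}\vee n^{-\alpha(\beta+1)/(2\alpha+d)}\big)$, matching the stated choice. For the matching lower bound I would superpose two constructions: (a) the classical Audibert--Tsybakov / \cite{CD14} hypercube family, which already forces $\Omega(n^{-\alpha(\beta+1)/(2\alpha+d)})$ for clean testing; and (b) a perturbation-packing family in which $\eta$ oscillates at spatial scale $\omega$ with amplitude $\asymp\omega^{\alpha}$ on a region of $P$-mass $\asymp\omega^{\alpha\beta}$ (consistent with the margin condition), so that the law of $\widetilde X$ alone cannot identify $\operatorname{sign}(2\eta(X)-1)$ there even with infinitely many samples, forcing $\Omega(\omega^{\alpha}\cdot\omega^{\alpha\beta})=\Omega(\omega^{\alpha(\beta+1)})$; the minimax rate is the maximum of the two.

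The step I expect to be the main obstacle is making the stochastic control genuinely uniform: the neighbor radius $r_k$, and --- for $x$ near $\partial\mathcal X$ --- even the identity of the neighbor set, depend on the sample \emph{and} on the random perturbation direction, so the $O(r_k^{\alpha})$ smoothing-bias estimate and the Chernoff tail must be shown to hold simultaneously on a single high-probability event, via the regularity assumption together with a union bound over the $O(\log n)$ margin shells (the case $\widetilde x\notin\mathcal X$ additionally needs the boundary version of the density condition). The remaining work --- the two-regime choice of $k$ and checking that both lower-bound constructions satisfy all the (to-be-made-explicit) smoothness, margin, and regularity conditions --- is routine bookkeeping.
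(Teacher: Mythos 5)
Your proposal follows essentially the same route as the paper's own proof: the upper bound uses the same three-way bias decomposition with $\zeta$ playing the role of the paper's $\Delta_0=O(\omega^{\alpha})+O((k/n)^{\alpha/d})$, the same dyadic margin peeling with Bernstein/Chernoff tails yielding $O(\Delta_0^{\beta+1})+O(k^{-(\beta+1)/2})$, and the same two-regime optimization of $k$; the lower bound likewise combines the Audibert--Tsybakov hypercube construction with a second family oscillating at scale $\omega$ with amplitude $\omega^{\alpha}$ on a set of mass $\omega^{\alpha\beta}$, exactly as in the paper's Case 2. No substantive differences to flag.
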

	Formal assumptions and results for Theorem \ref{thm:informal} are postponed in Section C in Appendix (Theorem S.3 for convergence rate and Theorem S.4 for minimax rate). From Theorem \ref{thm:informal}, the rate of regret is dominated by the larger one between the random perturbation effect ($\omega^{\alpha(\beta+1)}$) and the minimax rate for clean data ($n^{-\alpha(\beta+1)/(2\alpha+d)}$). 
	Similar as in \cite{samworth2012optimal},
	our regret result derived under conditions A.1-A.3 matches the minimax rate of Theorem \ref{thm:informal} by taking $\alpha=2$ and $\beta=1$.

\begin{remark}[Adversarial Data Corruption]
So far, we focus on the case of random perturbation. As a by-product, we analyze the effect of some special non-random data corruption. Due to page limit constraint, the detailed results and discussions are postponed to Section A in appendix. In general, comparing with worst-case data corruption, the effect of random perturbed noise will mostly cancel out in each perturbation direction, thus leading to a smaller corruption effect term. Therefore, unsurprisingly, $k$-NN is more robust to random perturbed data corruption than worse-case adversarial data corruption. However, our rigorous analysis shows that the regret under adversarial data corruption (defined formally in Appendix) is still of the same order as in the case of random perturbation but with a larger multiplicative constant when $\omega\succcurlyeq n^{-2/(d+4)}$ (when $\omega\preccurlyeq n^{-2/(d+4)}$, the effect of either adversarial corruption / random perturbation is negligible).    
\end{remark}

\subsection{Comparison with Noise-Injected $k$-NN}\label{sec:robust}
   

In an iterative adversarial training algorithm (e.g. \citealp{sinha2018certifying}),  attack is designed in each iteration based on the current model, and the model is updated based on attacked training data. Similarly for random perturbation, a natural idea to enhance the robustness of $k$-NN is to inject noise in training data so that training and testing data share the same distribution. One can  randomly perturb training samples and train $k$-NN using the perturbed training data set. Comparing this noise-injection $k$-NN and traditional $k$-NN methods, we find that there is no performance improvement for the former even when the corruption level is in the early stage of the polynomial deterioration regime. 

Denote $\widetilde{g}(\widetilde{x}):=P(Y=1|\widetilde{x}\text{ is observed})$ as the Bayes estimator and $\widehat{g}'_{n}$ as the estimator trained using randomly perturbed training data. Let both estimators $\widehat{g}_n$ and $\widehat{g}'_n$ adopt their best choices of $k$ respectively. Then we have
\begin{theorem}\label{thm:no_diff}
    Under [A.1] to [A.3], when $0<\omega^3\preccurlyeq n^{-4/(d+4)}$,
    \begin{eqnarray}\label{eqn:compare}
    \frac{P(Y\neq\widehat{g}_{n}(\widetilde{x}))-P(Y\neq \widetilde{g}(\widetilde{x}))}{ P(Y\neq\widehat{g}_{n}'(\widetilde{x}))-P(Y\neq \widetilde{g}(\widetilde{x})) }\rightarrow 1.
    \end{eqnarray}
\end{theorem}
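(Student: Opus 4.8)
The plan is to rewrite both the numerator and the denominator of (\ref{eqn:compare}) as excess risks measured against the common clean reference $P(Y\neq g(X))$, and then to show that after this reduction both sides equal $\tilde A\,n^{-4/(d+4)}(1+o(1))$ with the \emph{same} leading constant $\tilde A>0$, namely the leading constant of the ordinary clean $k$-NN regret for $(X,Y)$. Throughout, note that $0<\omega^3\preccurlyeq n^{-4/(d+4)}$ forces $\omega\to 0$, $\omega^3=o(n^{-4/(d+4)})$, and (at any $k\asymp n^{4/(d+4)}$) $\epsilon_{k,n,\omega}^3=\max\{(\log k/\sqrt k)^3,(t+\omega)^3\}=o(n^{-4/(d+4)})$.

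\textbf{Reduction and size of the numerator.} Since $P(Y\neq\widetilde g(\widetilde X))$ does not depend on $k$, the $k$ chosen for $\widehat g_n$ is precisely the minimizer of $\mathrm{Regret}(k,n,\omega)$, so the numerator equals $\mathrm{Regret}(n,\omega)-R_0(\omega)$, where $R_0(\omega):=P(Y\neq\widetilde g(\widetilde X))-P(Y\neq g(X))$ is deterministic. I would first establish the key identity
\[
R_0(\omega)=\tfrac12\int_{\mathcal S}\tfrac{\omega^2}{d}\,\|\dot\Psi(x_0)\|\,d\text{Vol}^{d-1}(x_0)+O(\omega^3),
\]
i.e. $R_0(\omega)$ coincides with the \emph{Corruption} term of Theorem \ref{thm:noise} up to lower order. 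Writing $\widetilde\eta(z):=P(Y=1\mid\widetilde X=z)$ as the ratio of the spherical averages of $\bar f\eta$ and $\bar f$, a second-order Taylor expansion gives $\widetilde\eta=\eta+\omega^2 b+O(\omega^4)$, hence $\widetilde{\mathcal S}:=\{\widetilde\eta=1/2\}$ lies within $O(\omega^2)$ of $\mathcal S$; therefore $g(\widetilde X)\neq\widetilde g(\widetilde X)$ only on a tube of width $O(\omega^2)$ on which $|2\widetilde\eta-1|=O(\omega^2)$, so $P(Y\neq g(\widetilde X))-P(Y\neq\widetilde g(\widetilde X))=O(\omega^4)$. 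For the remaining piece $P(Y\neq g(\widetilde X))-P(Y\neq g(X))$ I would condition on $X$, parametrize by its signed distance $s$ to $\mathcal S$, note that $\mathrm{sign}(\eta(\widetilde X)-1/2)\neq\mathrm{sign}(\eta(X)-1/2)$ with probability $P(\omega U_1>|s|)$ where $U_1$ is one coordinate of $\mathrm{Unif}(\partial B(0,1))$, and combine $\mathbb E U_1^2=1/d$, $|2\eta-1|\approx 2|s|\,\|\dot\eta\|$, $\|\dot\Psi\|=2\bar f\|\dot\eta\|$ on $\mathcal S$, and the co-area formula; this yields the displayed leading term with $O(\omega^3)$ error from the quadratic Taylor remainder of $\eta$ inside the $O(\omega)$-wide tube. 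Combining this with Theorem \ref{thm:noise}, whose bias and variance terms are $\omega$-free, $\mathrm{Regret}(k,n,\omega)=[\mathrm{Bias}(k)+\mathrm{Variance}(k)]+\tfrac12\int_{\mathcal S}\tfrac{\omega^2}{d}\|\dot\Psi\|\,d\text{Vol}^{d-1}+O(\epsilon_{k,n,\omega}^3)$, so the minimizing $k$ is $\asymp n^{4/(d+4)}$, the $\omega^2$ pieces of $\mathrm{Regret}(n,\omega)$ and $R_0(\omega)$ cancel, and the numerator equals $\min_k[\mathrm{Bias}(k)+\mathrm{Variance}(k)]\,(1+o(1))=\tilde A\,n^{-4/(d+4)}(1+o(1))$.

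\textbf{The denominator via the clean theory for $(\widetilde X,Y)$.} The estimator $\widehat g'_n$ is ordinary $k$-NN built on the i.i.d.\ sample $\{(\widetilde X_i,Y_i)\}$, whose common law has marginal density $\bar f_{\widetilde X}=\bar f+O(\omega^2)$ and regression function $\widetilde\eta=\eta+\omega^2 b+O(\omega^4)$. For small $\omega$ this law satisfies A.1--A.3 (in particular the $C^4$ bound near the boundary survives the spherical convolution, because near $\widetilde{\mathcal S}$ the averaging only samples $\eta$ inside $\{|\eta-1/2|\le c_x\}$), so the $\omega=0$ case of Theorem \ref{thm:noise} applies and gives, at its optimal $k$, $P(Y\neq\widehat g'_n(\widetilde X))-P(Y\neq\widetilde g(\widetilde X))=A'\,n^{-4/(d+4)}(1+o(1))$, where $A'$ is the same functional of $(\widetilde\eta,\bar f_{\widetilde X},\widetilde{\mathcal S})$ that $\tilde A$ is of $(\eta,\bar f,\mathcal S)$. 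Since $\widetilde\eta\to\eta$ in $C^4$ near the boundary, $\bar f_{\widetilde X}\to\bar f$, $\widetilde{\mathcal S}\to\mathcal S$ as $\omega\to 0$ (whence the induced $b$- and $t$-functionals converge as well), $A'\to\tilde A$. Combined with the previous paragraph, numerator and denominator are both $\tilde A\,n^{-4/(d+4)}(1+o(1))$, which proves (\ref{eqn:compare}).

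\textbf{Main obstacle.} The computation of $R_0(\omega)$ is a routine tube/co-area calculation. The substantive difficulty is the last step: Theorem \ref{thm:noise} must be invoked not for a fixed distribution but for the triangular array $(\widetilde X,Y)$ whose law drifts with $n$ through $\omega=\omega(n)\to 0$, so both the remainder estimate $O(\epsilon^3)$ and the convergence $A'\to\tilde A$ must be shown to hold uniformly along this drift. Concretely, one has to track how the constants in Theorem \ref{thm:noise} depend only on uniformly controlled quantities---bounds on the derivatives of $\eta,\bar f$ up to fourth order and a lower bound on $\inf_{\mathcal S}\|\dot\eta\|$---and verify that each of these degrades by at most $O(\omega^2)$ under spherical convolution; this is where the bulk of the technical work lies.
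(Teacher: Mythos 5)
Your proposal is correct and follows essentially the same route as the paper: the paper likewise computes $P(Y\neq \widetilde g(\widetilde X))-P(Y\neq g(X))=\frac{\omega^2}{2d}\int_{\mathcal S}\|\dot\Psi(x_0)\|\,d\text{Vol}^{d-1}(x_0)+O(\omega^4)$ via the same tube calculation, cancels it against the Corruption term of Theorem~\ref{thm:noise} so that the numerator reduces to the clean Bias$+$Variance $\asymp n^{-4/(d+4)}$ (with the remainder controlled precisely by $\omega^3\preccurlyeq n^{-4/(d+4)}$), and treats the denominator as the clean regret of $k$-NN for the law of $(\widetilde X,Y)$, whose multiplicative constant converges to the clean one as $\omega\to 0$. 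If anything, you are more explicit than the paper about the one genuinely delicate point --- applying Theorem~\ref{thm:noise} uniformly along the $n$-dependent law of $(\widetilde X,Y)$ --- which the paper's proof passes over in a single sentence.
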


Although it is intuitive to consider perturbing training data such that they match the distribution of the corrupted testing data,
result (\ref{eqn:compare}) implies that the estimators $\widehat g_n$ and $\widehat g_n'$ asymptotically share the same predictive performance for randomly perturbed testing data, and noise injection strategy is futile.
Note that this result holds when $\omega$ is small. Combined with our analysis in Theorem \ref{thm:noise}, within the range $n^{-2/(d+4)}\preccurlyeq\omega\preccurlyeq
n^{4/3(d+4)}$, the regret is deteriorated polynomially due the impact of testing data corruption, and the deterioration can not be remedied by noise-injection adversarial training at all.  One heuristic explanation is that, 
 such an injected perturbation may introduce additional noise to the estimation procedure and change some underlying properties (e.g. smoothness), and consequently this strategy of perturbing training data doesn't necessarily help to achieve smaller regret especially when $\omega$ is small.


\begin{remark}
    The above robustness result can be applied after we obtain knowledge while testing. In reality, we do not know whether the testing data is corrupted or not. In such a case, other techniques, e.g. hypothesis testing, are needed to be applied while testing.
\end{remark}

	\section{Application to other Nearest Neighbor-Type Algorithms}

Our theoretical analysis may be adapted to other NN-type algorithms: pre-processed 1NN  \citep{xue2017achieving} and distributed-NN \citep{duan2018DNN}. In particular, we prove that the regret of the former is sub-optimal for some class of distributions, and explain why the regret of the latter converges in the optimal rate, both from the random perturbation viewpoint.

		
	
	\subsection{Pre-processed 1NN}

	In some literature \cite{xue2017achieving,wang2017analyzing}, the algorithms run 1NN to do prediction using pre-processed data instead of running $k$-NN using raw data. The pre-processing step (or called de-noising step) is reviewed in Algorithm \ref{alg:pre}. Specifically, we firstly run $k$-NN to predict labels for the training data set, then replace the original labels with the predict labels $\widehat{y}_i$'s. In this way, applying 1NN on data $(x_1,\widehat{y}_1)$,...,$(x_n,\widehat{y}_n)$ can achieve good accuracy  while the computational cost is much smaller than $k$-NN. 
	
		\begin{algorithm}[tb]
			\caption{Data Pre-processing}
			\label{alg:pre}
			\begin{algorithmic}
				\STATE {\bfseries Input:} data $(x_1,y_1)$,..., $(x_n,y_n)$, number of neighbors $k$.
				\FOR{$i=1$ {\bfseries to} $n$}
				\STATE Find the $k$ nearest neighbors of $x_i$ in $x_1$,...,$x_n$, excluding $x_i$ itself. Denote the index set of these $k$ neighbors as $N_i$.
				\STATE Estimate a label for $x_i$ as
				\begin{eqnarray*}
				\widehat{\eta}(x_i)&=&\frac{1}{k}\sum_{j\in N_i}y_j,\\
				\widehat{y}_i&=& 1_{\{ 	\widehat{\eta}(x_i)>1/2 \}}.
				\end{eqnarray*}
				\ENDFOR
				\STATE{\bfseries Output:} $(x_1,\widehat{y}_1)$,...,$(x_n,\widehat{y}_n)$.
			\end{algorithmic}
 		\end{algorithm}
	This in fact can be treated as an application of random perturbation of testing data in $k$-NN, in the sense that this classifier can be equivalently represented as $k$-NN under perturbed sample:
	\[\widehat{g}_{1\rm NN}(x) = \widehat{g}_{n,k}( \widetilde{x}),\]
	where $\widehat{g}_{1\rm NN}$ is the pre-processed 1NN classifier, and $\widetilde{x}$ is the perturbed observation of $x$, which is the nearest neighbor of $x$.
	Although the $\widetilde x$ here doesn't exactly match our definition (\ref{def:noise}), it still can be viewed as randomly perturbed $x$ with level of contamination $\omega =\Theta(n^{-1/d}) $, which is the order for the expected length from $x$ to the nearest neighbor under Assumption A.1.
	
From this point of view, Theorem \ref{thm:noise} can be be applied to derive the regret of the pre-processed 1NN algorithm, whose rate of convergence turns out to be slower than the optimal rate $\Theta(n^{-4/(d+4)})$ of $k$-NN when the data dimension $d$ is relatively high, say $d>4$. 
	
	
	\begin{theorem}\label{thm:1nn}
        Under [A.1] to [A.3], the regret of pre-processed 1NN under un-corrupted testing data is
        \begin{equation*}
        \begin{split}
          \textit{Regret}_{\textit{1NN}}(k,n) =&\frac{1}{2}\int_{\mathcal{S}}\frac{\|\dot{\Psi}(x_0)\|}{\|\dot{\eta}(x_0)\|^2}\left(b(x_0)t(x_0)\right)^2 d\text{Vol}^{d-1}(x_0)+\frac{1}{2}\int_{\mathcal{S}}\frac{1}{4k}\frac{\|\dot{\Psi}(x_0)\|}{\|\dot{\eta}(x_0)\|^2}d\text{Vol}^{d-1}(x_0)\\&+ \text{Corruption} + \text{Remainder},
          \end{split}
        \end{equation*} 
        where \begin{eqnarray*}
        \text{Corruption} &=& \Theta(n^{-2/d}),\\
        \text{Remainder}&=&o(n^{-2/d})
        \end{eqnarray*}
        when both $1/k$ and $(k/n)^{4/d}$ are of $O(n^{-1/d})$, and $k=O(n^{6/d})$. As a result, pre-processed 1NN is sub-optimal when $d>4$ (compared with optimal rate $n^{-4/(d+4)}$).
\end{theorem}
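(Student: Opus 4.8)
The plan is to realise the pre-processed $1$NN rule as an instance of $k$-NN applied to a randomly perturbed test point and then invoke Theorem~\ref{thm:noise}. By construction of Algorithm~\ref{alg:pre}, for a test point $x$ the $1$NN rule returns $\widehat y_{i^\ast}$, where $x_{i^\ast}$ is the nearest training point to $x$ and $\widehat y_{i^\ast}=1\{\widehat\eta(x_{i^\ast})>1/2\}$ is the (leave-one-out) $k$-NN classification at $x_{i^\ast}$. Hence $\widehat g_{1\mathrm{NN}}(x)=\widehat g_{n,k}(\widetilde x)$ with $\widetilde x=x_{i^\ast}$, up to the leave-one-out modification addressed at the end, so the effective ``perturbation'' is $\varepsilon:=\widetilde x-x$, the displacement from $x$ to its nearest training neighbour; it then suffices to re-run the regret expansion behind Theorem~\ref{thm:noise} with the exogenous sphere noise of (\ref{def:noise}) replaced by this data-dependent $\varepsilon$.

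First I would pin down the conditional law of $\varepsilon$ given $X=x$. Under A.1 the density $\bar f$ is differentiable, finite and bounded away from $0$, so the nearest-neighbour radius $R=\|\varepsilon\|$ obeys $P(R>r\mid x)=(1-\bar f(x)V_d r^d(1+o(1)))^{\,n}$, where $V_d$ is the volume of the unit ball; consequently $n\bar f(x)V_d R^d$ converges to an $\mathrm{Exp}(1)$ variable, $\mathbb E[R^2\mid x]=c_d(n\bar f(x))^{-2/d}(1+o(1))=\Theta(n^{-2/d})$ uniformly in $x$ for a dimensional constant $c_d$, and $\mathbb E[R^3\mid x]=O(n^{-3/d})$. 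Conditionally on $R$, the direction $\varepsilon/R$ has density on $S^{d-1}$ proportional to $\bar f(x+R\,\cdot\,)$, which is uniform up to an $O(R)$ tilt along $\dot{\bar f}(x)$; hence $\mathbb E[\varepsilon\varepsilon^\top\mid x]=\tfrac1d\mathbb E[R^2\mid x]\,I_d+O(n^{-3/d})$ (isotropic to leading order) and $\mathbb E[\varepsilon\mid x]=O(n^{-2/d})$.

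Next I would feed $\varepsilon$ into the expansion. By the ``Effect of Metric of Noise'' computation following Theorem~\ref{thm:noise}, the perturbation law enters the leading-order regret only through the quadratic form $\|\dot\eta(x_0)\|^{-2}\,\mathbb E[(\varepsilon^\top\dot\eta(x_0))^2]$ at boundary points $x_0\in\mathcal S$, whereas the $(b(x_0)t(x_0))^2$ bias and the $1/(4k)$ variance are inherited verbatim from the $k$-NN estimator acting at $\widetilde x$. Isotropy gives $\mathbb E[(\varepsilon^\top\dot\eta(x_0))^2]=\tfrac1d\mathbb E[R^2\mid x_0]\|\dot\eta(x_0)\|^2+O(n^{-3/d})$, so
\[
\text{Corruption}=\tfrac12\int_{\mathcal S}\tfrac1d\,\mathbb E[R^2\mid x_0]\,\|\dot\Psi(x_0)\|\;d\text{Vol}^{d-1}(x_0)=\Theta(n^{-2/d}),
\]
since $\mathbb E[R^2\mid x_0]=c_d(n\bar f(x_0))^{-2/d}(1+o(1))$ and the remaining integrand is bounded away from $0$ and $\infty$ on the compact nonempty boundary $\mathcal S$. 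The first-order drift $\dot\eta(x_0)^\top\mathbb E[\varepsilon\mid x_0]$ and the curvature term $\tfrac12\mathbb E[\varepsilon^\top\ddot\eta(x_0)\varepsilon\mid x_0]$ merely augment the estimator bias; being $O(n^{-2/d})$ they affect the regret only through their square $O(n^{-4/d})$ and their cross term with $b(x_0)t(x_0)$ of order $(k/n)^{2/d}n^{-2/d}$, both $o(n^{-2/d})$. Substituting $t\asymp(k/n)^{2/d}$ and $\omega\asymp n^{-1/d}$ into $\epsilon_{k,n,\omega}^3$ then shows the Theorem~\ref{thm:noise} remainder is $o(n^{-2/d})$ precisely on the stated range of $k$, which gives the claimed expansion. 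Optimizing the $k$-dependent part $\Theta((k/n)^{4/d}+1/k)$ yields the (admissible) choice $k\asymp n^{4/(d+4)}$ with value $\Theta(n^{-4/(d+4)})$, while the $k$-free corruption $\Theta(n^{-2/d})$ persists, and outside the stated window the regret is even larger; hence $\mathrm{Regret}_{1\mathrm{NN}}(n)=\Theta(n^{-4/(d+4)}\vee n^{-2/d})$, which equals $\Theta(n^{-2/d})$ and is strictly slower than $n^{-4/(d+4)}$ exactly when $d>4$ (as $2/d<4/(d+4)\iff d>4$).

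The step I expect to be the main obstacle is the dependence between the ``perturbation'' $\widetilde x$ and the $k$-NN estimator evaluated at $\widetilde x$: unlike (\ref{def:noise}), $\widetilde x$ is itself a training point whose $k$ nearest neighbours and whose distance $R$ are functions of the same sample. The fix is to condition on the location of $\widetilde x$ first, carry out the bias/variance analysis at that fixed point using the remaining $n-1$ training samples (so the estimator becomes conditionally independent of the displacement $\varepsilon$), and only afterwards integrate over the conditional law of $\widetilde x$ given $x$ and over $x$. Along the way one must also check that using ``$k$ nearest among the other $n-1$ points'' instead of the exact setup of Theorem~\ref{thm:noise}, together with the $\Theta(n^{-1/d})$ gap between the perturbation scale and the estimation bandwidth $(k/n)^{1/d}$, contributes only to the $o(n^{-2/d})$ remainder.
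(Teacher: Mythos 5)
Your proposal is correct and follows essentially the same route as the paper's own (much terser) proof: identify the pre-processed 1NN classifier with $k$-NN evaluated at the nearest-neighbor-perturbed point, note that the effective perturbation has $\mathbb{E}[R^2\mid x]=\Theta(n^{-2/d})$ with an approximately isotropic direction so that the corruption term in Theorem~\ref{thm:noise} becomes $\Theta(\omega^2)=\Theta(n^{-2/d})$, and compare $n^{-2/d}$ with $n^{-4/(d+4)}$ to conclude sub-optimality for $d>4$. Your additional care about the non-centered drift $\mathbb{E}[\varepsilon\mid x]=O(n^{-2/d})$ and about the dependence between $\widetilde{x}$ and the neighborhood used by the estimator (handled by conditioning on $\widetilde{x}$ and using the remaining $n-1$ points) fills in exactly the steps the paper dismisses with ``this difference only affects the remainder term,'' so the two arguments are substantively identical.
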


	The result in Theorem \ref{thm:1nn} reveals a sub-optimal rate for the pre-processed $1$NN, in comparison with the optimal rate claimed by \cite{xue2017achieving}. This is not a contradiction, but due to different assumptions imposed for $\eta$ and the distribution of $X$. \citet{xue2017achieving} assumes $\alpha$-smoothness condition $|\eta(x)-\eta(x')|\leq A\|x-x'\|^{\alpha}$ which is more general than our smoothness assumption A.3.

	\subsection{Distributed-NN}
The computational complexity of $k$-NN is huge if $n$ is large, therefore we consider a distributed NN algorithm: we randomly partition the original data into $s$ equal-size parts, then given $x$, for each machine, the $k/s$ nearest neighbors of $x$ are selected and calculate $\widehat{\eta}_j(x)$ for $j=1,...,s$, finally we average $\widehat{\eta}_1(x)$,...,$\widehat{\eta}_s(x)$ to obtain $\widehat{\eta}(x)$. The algorithm is shown in Algorithm \ref{alg:dist} as in \cite{duan2018DNN}.
		
			\begin{algorithm}[tb]
				\caption{Distributed-NN}
				\label{alg:dist}
				\begin{algorithmic}
					\STATE {\bfseries Input:} data $(x_1,y_1)$,..., $(x_n,y_n)$, number of neighbors $k$, number of slaves $s$, a point $x$ for prediction.
					\STATE Randomly divide the whole data set into $s$ parts, with index sets $S_1$,...,$S_s$.
					\FOR{$i=1$ {\bfseries to} $s$}
					\STATE Find the $k/s$ nearest neighbors of $x$ in $\{ x_j\;|\; j\in S_i \}$. Denote the index set of these $k/s$ neighbors as $N_i$.
					\STATE Estimate
					\begin{eqnarray*}
						\widehat{\eta}_i(x)&=&\frac{1}{k/s}\sum_{j\in N_i}y_j.
					\end{eqnarray*}
					\ENDFOR
					\STATE Estimate the label of $x$ as
					\begin{eqnarray*}
						\widehat{\eta}(x)&=&\frac{1}{s}\sum_{i=1}^s 	\widehat{\eta}_i(x),\\
						\widehat{y}&=& 1_{\{ 	\widehat{\eta}(x)>1/2 \}}.
					\end{eqnarray*}
					\STATE{\bfseries Output:} $(x,\widehat{y})$.
				\end{algorithmic}
			\end{algorithm}

Distributed-NN is practically different from $k$-NN in a single machine since the $k$ selected neighbors aggregated from $s$ subsets of data are not necessarily the same $k$ nearest neighbors selected in a single machine. Therefore, an additional assumption $k/s\rightarrow\infty$ is imposed, to ensure that the neighborhood set selected by distributed NN behaves similarly to the neighborhood set selected by single machine $k$-NN, in the sense that $E\|X_i-x\|^2$, where $X_i$ belongs to the distributed NN neighborhood set, is of the same order of $t(x)$. 
Therefore, based on Theorem 1,  we obtain that the order of regret of Distributed-NN is in fact of the same order as $k$-NN. Formally, we have the following corollary:
	\begin{corollary}
	Under [A.1] to [A.3], when $\int_{\mathcal{S}}\|\dot{\Psi}(x_0)\|d Vol^{d-1}(x_0)\neq 0$ and $P(b(X)>0|\eta(X)=1/2)>0$, if the number of machines $s\preccurlyeq k$, then 
	\begin{equation*}
	   \mbox{Regret}_{\rm DNN}(k,n) = \Theta(\mbox{Regret}_{\rm kNN}(k,n)).
	\end{equation*}
	where $\mbox{Regret}_{\rm DNN}$ and $\mbox{Regret}_{\rm kNN}$ denote the (clean testing data) regret of distributed NN and $k$-NN algorithms respectively.	\end{corollary}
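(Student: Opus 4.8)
The plan is to view the Distributed-NN estimator as an average of $s$ \emph{independent} pieces and to transplant the bias--variance machinery behind Theorem~\ref{thm:noise} (with corruption level $\omega=0$) onto each piece. Write $\widehat{\eta}_{\rm DNN}(x)=\frac1s\sum_{i=1}^s\widehat{\eta}_i(x)$, where $\widehat{\eta}_i(x)=\frac{s}{k}\sum_{j\in N_i}y_j$ is a $(k/s)$-NN estimator built from the $i$th sub-sample, which has size $n/s$ and is an i.i.d.\ copy of the data. For $\widehat{\eta}_i$ the relevant expected squared neighbor distance is $t_i(x)=\mathbb{E}(\|X_j-x\|_2^2\mid j\in N_i)$; since the $i$th sub-sample has size $n/s$ and $N_i$ collects its $k/s$ nearest points, the hypothesis $s\preccurlyeq k$ forces $k/s\to\infty$, and Assumption A.1 then gives $t_i(x)=\Theta(((k/s)/(n/s))^{2/d})=\Theta((k/n)^{2/d})=\Theta(t(x))$ uniformly in $x$ --- this is exactly the fact flagged just before the corollary, and it is what licenses reusing the Theorem~\ref{thm:noise} analysis.

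Carrying out the Taylor expansion of $\eta$ used in the proof of Theorem~\ref{thm:noise} for each $\widehat{\eta}_i$ separately, the first-order terms cancel by the asymptotic isotropy of each neighbor cloud, so $\mathbb{E}\{\widehat{\eta}_i(x)\}-\eta(x)=b(x)t_i(x)+(\text{higher order})$ with $b(\cdot)$ exactly the function in Theorem~\ref{thm:noise}; moreover, conditionally on the design the $k/s$ labels in $N_i$ are independent Bernoulli variables, so $\mathrm{Var}\{\widehat{\eta}_i(x)\}=\eta(x)(1-\eta(x))\,s/k+o(s/k)$. Averaging the $s$ independent $\widehat{\eta}_i$'s, the bias of $\widehat{\eta}_{\rm DNN}(x)$ is $b(x)\,\widetilde{t}_{\rm DNN}(x)+(\text{higher order})$ with $\widetilde{t}_{\rm DNN}(x):=\frac1s\sum_{i=1}^s t_i(x)=\Theta((k/n)^{2/d})$, of the same pointwise order as $t(x)$, while $\mathrm{Var}\{\widehat{\eta}_{\rm DNN}(x)\}=\frac1{s^2}\sum_{i=1}^s\mathrm{Var}\{\widehat{\eta}_i(x)\}=\eta(x)(1-\eta(x))/k+o(1/k)$, i.e.\ identical to the $k$-NN variance to leading order (equal to $1/(4k)$ on $\mathcal{S}$).

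Feeding these expansions into the decision-boundary argument of Theorem~\ref{thm:noise} --- the band $\{\,|\eta(x)-1/2|\le c_x\,\}$ handled by a Berry-Essen expansion for $\widehat{\eta}_{\rm DNN}(x)$ and its complement contributing exponentially small terms --- shows that $\mbox{Regret}_{\rm DNN}(k,n)$ equals the right-hand side of (\ref{eqn:noise}) with $\omega=0$ and with $t(x_0)$ replaced by $\widetilde{t}_{\rm DNN}(x_0)$, plus a remainder of order $o((k/n)^{4/d}+1/k)$. The two hypotheses enter precisely here: $\int_{\mathcal{S}}\|\dot{\Psi}(x_0)\|\,d\text{Vol}^{d-1}(x_0)\neq 0$ keeps the variance constant --- which is literally the same as for $k$-NN --- bounded away from $0$, and $P(b(X)>0\mid\eta(X)=1/2)>0$ together with continuity keeps the squared-bias constant bounded away from $0$. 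Hence $\mbox{Regret}_{\rm DNN}(k,n)=\Theta((k/n)^{4/d}+1/k)$; applying the same two facts to (\ref{eqn:noise}) with $\omega=0$ gives $\mbox{Regret}_{\rm kNN}(k,n)=\Theta((k/n)^{4/d}+1/k)$ as well, so the ratio of the two is bounded above and below by positive constants, which is the claimed $\mbox{Regret}_{\rm DNN}(k,n)=\Theta(\mbox{Regret}_{\rm kNN}(k,n))$.

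The main obstacle is the remainder control. The distributed neighbor set is a mixture of $s$ nearest-neighbor clouds, each drawn from only a $1/s$-fraction of the data with $k/s$ possibly diverging slowly, so its higher moments differ from those appearing in the single-machine analysis; one must verify, uniformly over $x$, that only the isotropic second moment contributes at order $(k/n)^{4/d}$ while every higher-order Taylor term and the Berry-Essen error stays $o((k/n)^{4/d}+1/k)$, and that $\widetilde{t}_{\rm DNN}(x)\asymp t(x)$ holds pointwise rather than merely on average. This uniform control --- which is exactly where the assumption $s\preccurlyeq k$ (equivalently $k/s\to\infty$) is consumed --- is the technical heart of the argument; once it is in place, the bias/variance bookkeeping and the boundary integration are routine repetitions of the proof of Theorem~\ref{thm:noise}.
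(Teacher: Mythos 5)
Your proposal is correct and follows essentially the same route as the paper: the key fact in both is that under $s\preccurlyeq k$ (so $k/s\to\infty$) the expected squared distance from $x$ to a point of the aggregated distributed neighborhood is $\Theta(t(x))=\Theta((k/n)^{2/d})$, after which the bias--variance decomposition of Theorem~\ref{thm:noise} with $\omega=0$ applies verbatim, and the two non-degeneracy hypotheses keep the variance and squared-bias constants bounded away from zero so that both regrets are $\Theta((k/n)^{4/d}+1/k)$. Your explicit decomposition into $s$ independent $(k/s)$-NN sub-estimators is just a slightly more structured bookkeeping of the same argument.
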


	\section{Numerical Experiments}
	In Section \ref{sec:sim1}, we will evaluate the empirical performance of $k$-NN algorithm
	for randomly perturbed testing data, where we compare the $k$-NN classifiers 
	trained by raw un-corrupted training data and trained by noise injected training data (i.e., $\widehat g_n$ versus $\widehat g_n'$ defined in Section \ref{sec:robust}). 
	In Section \ref{sec:sim2}
	We also conduct experiments to compare $k$-NN with pre-processed 1NN for un-corrupted testing data. These numerical experiments are intended to show: (i) $k$-NN trained by un-corrupted training data has a similar testing  performance with $k$-NN trained by noise injected training data when $\omega$ is small, which validates our Theorem \ref{thm:no_diff};  (ii) for un-corrupted testing data, the regret of pre-processed 1NN is much worse than that of $k$-NN if $d>4$, which validates our Theorem \ref{thm:1nn}. In general, $k$-NN is expected to be always better than pre-processed 1NN.

Note that in all our real data applications except for MNIST data set, we normalized all attributes.	
	

	\subsection{Tackling Random Perturbation}\label{sec:sim1}
	\subsubsection{Simulation}
    The random variable $X$ is of $5$ dimension, and each dimension independently follows exponential distribution with mean 0.5. The conditional mean of $Y$ is defined as
	\begin{eqnarray}\label{eqn:simualtion_model}
		\eta(x)&=&\frac{e^{x^{\top}w}}{e^{x^{\top}w}+e^{-x^{\top}w}},\\\qquad
		w_i&=& i-d/2\qquad i=1,...,d.\nonumber
	\end{eqnarray}
For each pair of $(k,n)$, we use $2^6,...,2^{11}$ training samples, 10000 testing samples and repeated 50 times to calculate the average regret. In each repetition, 5-fold cross validation was used to obtain $\widetilde{k}$. 
Then based on \cite{samworth2012optimal}, we adjust the number of neighbors to $\widehat{k}=\widetilde{k}(5/4)^{4/(4+d)}$ since $\widetilde{k}$ is the best $k$ value for $4n/5$ samples instead of $n$ samples. 
The two classifiers, trained via 
un-corrupted training data and corrupted training data respectively, used to predict corrupted testing data.

From Figure \ref{fig:noise_simu}, as the number of training samples increases, the regret for both $k$-NNs gets reduced for $0<\omega\leq 0.05$ in the same speed. This verifies that these two $k$-NNs in fact do not differ a lot when $\omega$ is small, i.e., Theorem \ref{thm:no_diff}. Empirically, the regret of $k$-NN trained by corrupted training data is worse than the one trained by un-corrupted training data when $\omega\leq 0.5$. On the other hand, when $\omega$ is large (such that required condition in Theorem \ref{thm:no_diff} fails), the two $k$-NNs may perform significantly different. For example, we tried $\omega=3$, when sample size $n=64$, $\log_2(\mbox{Regret})$ is -2.86 using uncontaminated data, and is -3.11 using corrupted training data. 

\begin{figure}[!ht]
    \centering
    \includegraphics[scale=0.41]{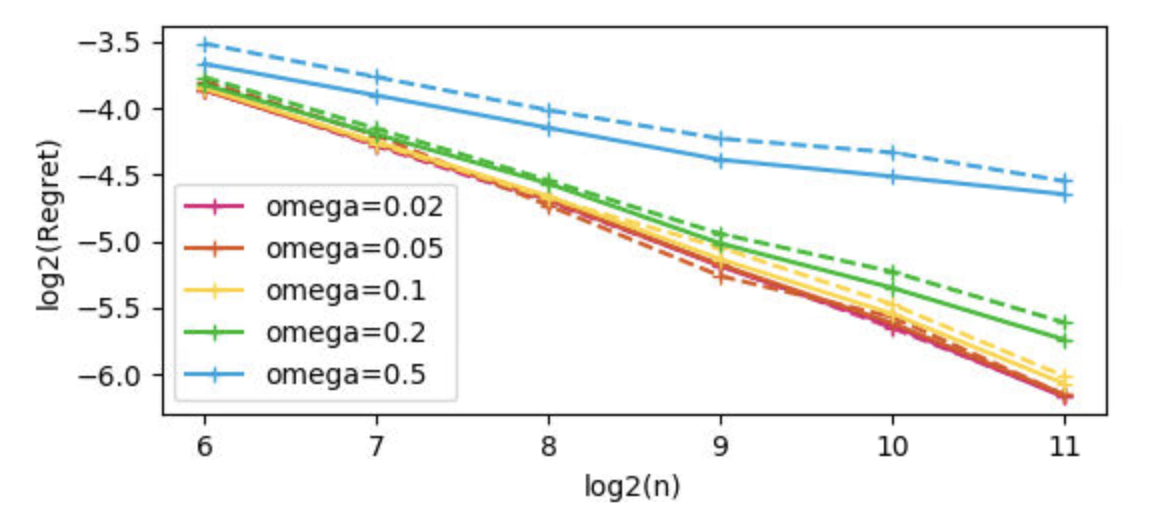}
    \caption{ Comparison between $k$-NN trained by raw training data (solid line) and $k$-NN trained by noise injected training data (dashed line) in Simulation. 
    }
    \label{fig:noise_simu}
\end{figure}

\subsubsection{Real Data}
We use two real data sets for the comparison of 2 $k$-NNs: Abalone (\citealp{Dua:2019}), HTRU2 (\citealp{lyon2016fifty}). For Abalone data set, the data set contains 4177 samples, and two attributes (\textit{Length} and \textit{Diameter}) are used in this experiment. The classification label is whether an abalone is older than 10.5 years. For HTRU2 data set \cite{lyon2016fifty}, the data has a size of of 17,898 with 8 continuous attributes. For each data set, 25\% of the samples are contaminated by random noise, and thereafter are used as testing data.

As shown in Figure \ref{fig:noise_real}, when $\omega$ is small, for both data sets, the error rate (misclassification rate) of the two $k$-NNs do not differ a lot when $\omega$ is small. 

\begin{figure}[ht!]
    \centering
    \includegraphics[scale=0.35]{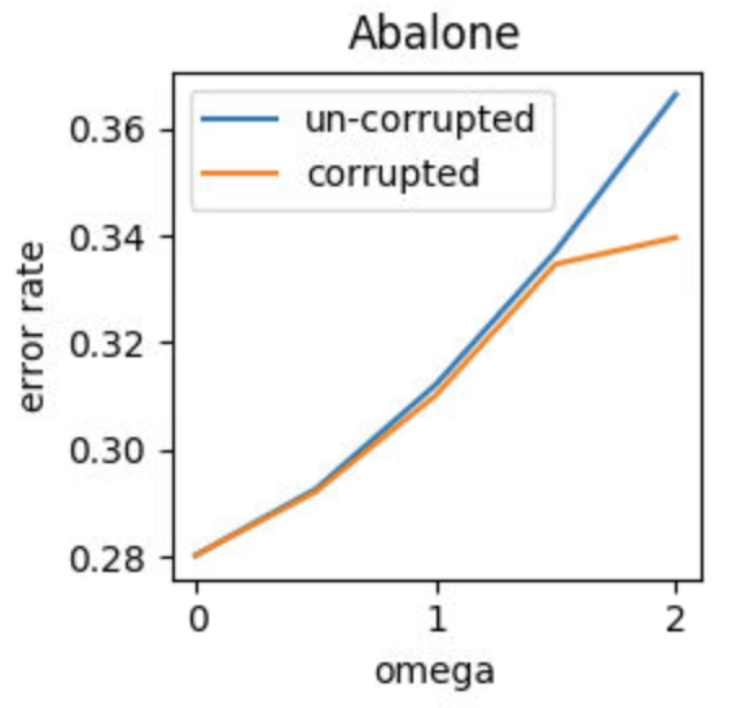}
    \includegraphics[scale=0.35]{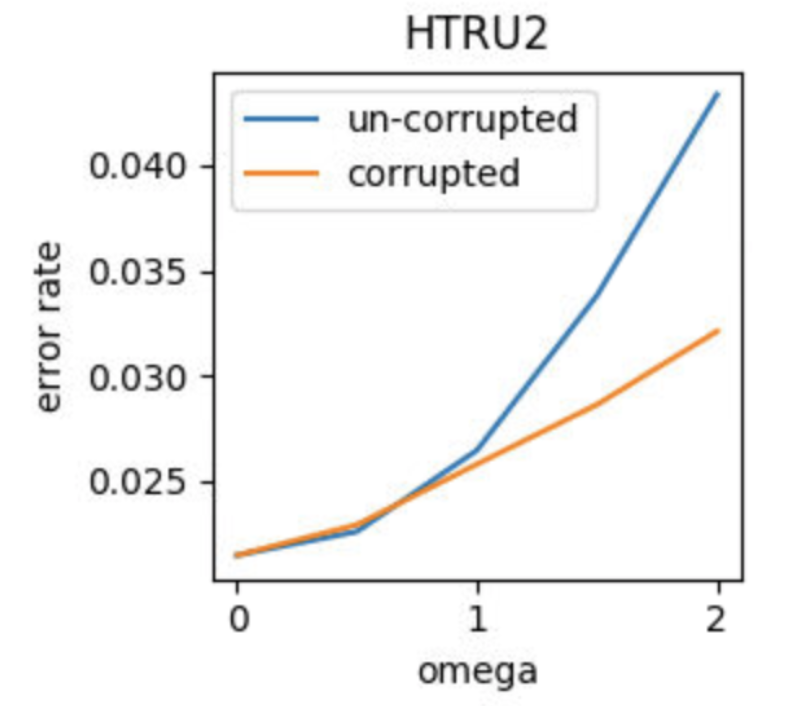}
    \caption{$k$-NN trained by raw Training Data vs noise injected Training Data}
    \label{fig:noise_real}
\end{figure}

	\subsection{Performance of 1NN with Pre-processed Data}\label{sec:sim2}
	\subsubsection{Simulation}

		\begin{figure}[!ht]
	    \centering
	    \includegraphics[scale=0.41]{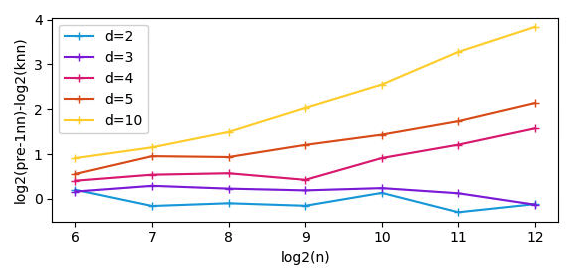}
	    \caption{Simulation Comparison between $k$-NN and pre-processed 1NN, the $y$ axis denotes the $\log_2(\mbox{Regre of pre-processed 1NN})-\log_2(\mbox{Regre of $k$NN})$}
	    \label{fig:knn_better}
	\end{figure}	
	To observe a clear difference, instead of $w_i$ in  (\ref{eqn:simualtion_model}), we use a model where each dimension of $x$ follows uniform $[0,1]$ distribution, with $\eta$ in (\ref{eqn:simualtion_model}), and
	\begin{eqnarray}\label{eqn:simualtion_model_1nn}
		w_i&=& i-d/2-0.5\qquad i=1,...,d.\nonumber
	\end{eqnarray}
	for different values of $d$ to compare the performance between $k$-NN and pre-processed 1NN. $X$ now follows $d$-dimensional uniform $(0,1)$. From Figure \ref{fig:knn_better}, we show that the order of regret of pre-processed 1NN is different from that of $k$-NN when $d\geq 4$. 

	We also replace the uniform distribution as exponential distribution with mean 0.5 for each dimension of $x$. Figure \ref{fig:1nn_exp} shows that for $d=5$ and $d=10$, the increasing trend of regret ratio is obvious, indicating a sub-optimal rate of pre-processed 1NN.

\begin{figure}[!ht]
    \centering
    \includegraphics[scale=0.41]{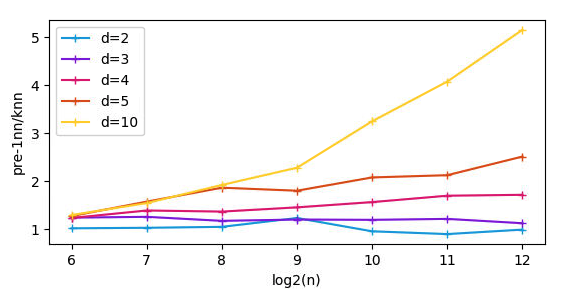}
    \caption{ Comparison between $k$-NN and pre-processed 1NN. Each Dimension of $X$ Independently Follows Exponential Distribution with Mean 0.5. Y Represents the Regret Ratio (instead of logarithm of Regret Ratio).}
    \label{fig:1nn_exp}
\end{figure}
	
	
	\subsubsection{Real Data}
	
	We use four data sets to compare the $k$-NN and pre-processed 1NN: MNIST, Abalone, HTRU2, and Credit (\citealp{yeh2009comparisons}).
	
	For MNIST, this data set contains 70000 samples and data dimension is 784.
	We randomly pick 25\% samples as testing data, and randomly pick $2^{i}$ ($i=7,8,...,12$) samples to train $k$-NN classifier and pre-processed 1NN classifier, where the choices of $k$ for both algorithms are determined by 5-folds cross validation. We repeated this procedure 50 times with different random seeds to obtain the mean testing error rates. 
	As shown in Figure \ref{fig:knn_better_mnist}, through increasing the number of training samples, the error rate ratio between pre-process 1NN classifier and $k$-NN classifier is stably above 1 and is around 1.17.
    
    	\begin{figure}[!ht]
	    \centering
	    \includegraphics[scale=0.46]{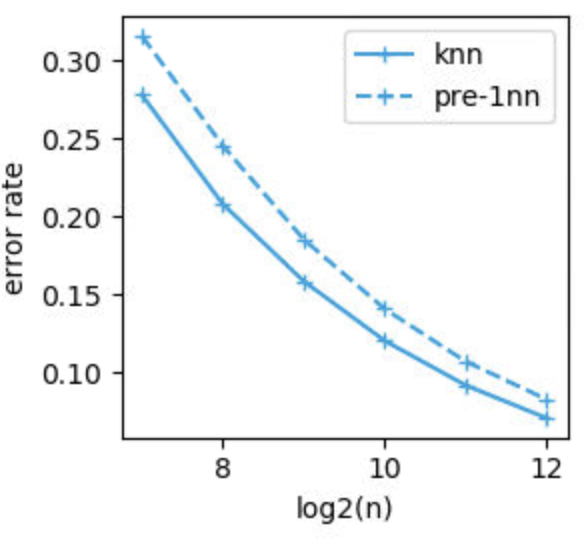}
	    \includegraphics[scale=0.45]{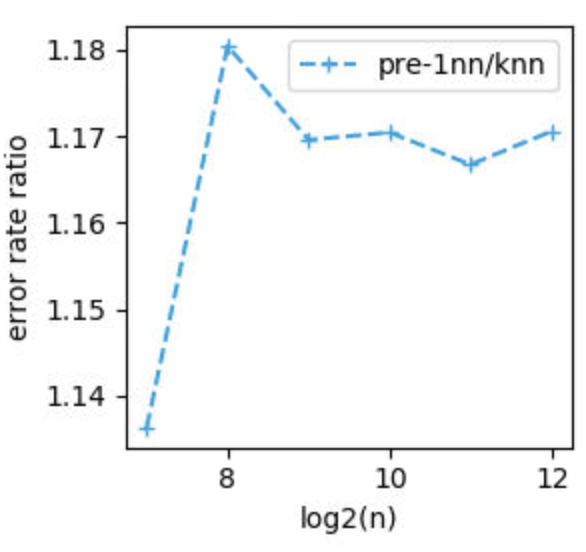}
	    \caption{MNIST, $k$-NN vs pre-processed 1NN}
	    \label{fig:knn_better_mnist}
	\end{figure}
	
For Abalone data set, we conducted experiment in the same way as MNIST, and observe that the error rate using pre-processed 1NN is always greater than $k$-NN. As is shown in Figure \ref{fig:my_label}, while the error rate for both pre-processed 1NN and $k$-NN are decreasing in $n$, their difference changes little when $n\leq 2^{11}$. 
	
	\begin{figure}[!ht]
    \centering
    \includegraphics[scale=0.42]{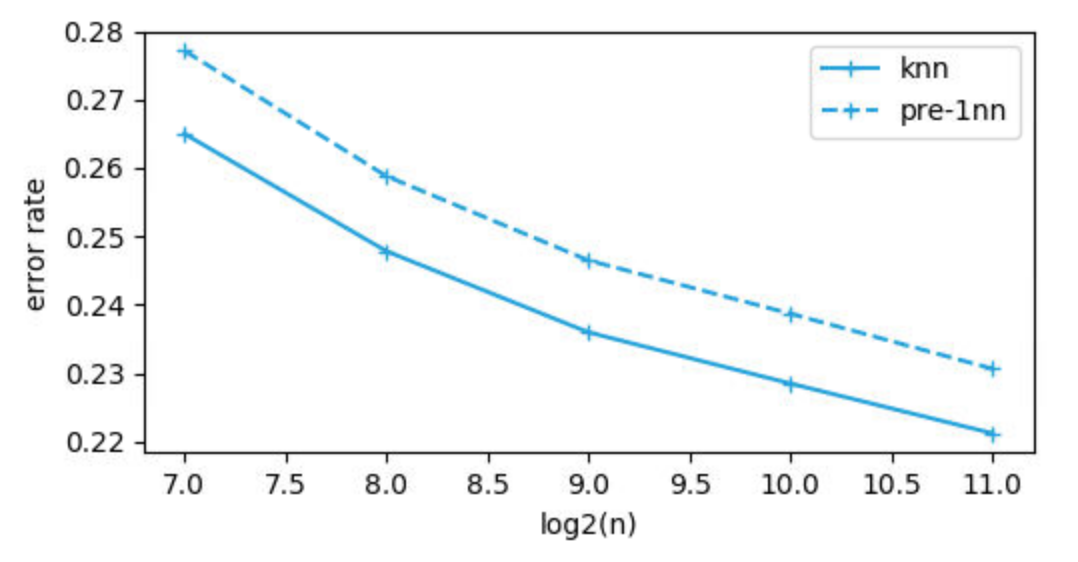}
    \caption{Comparison between kNN and Pre-processed 1NN in Abalone Data Set}
    \label{fig:my_label}
\end{figure}

	In Credit data set, there are 30000 samples (25\% as testing data) with 23 attributes. For HTRU2 and Credit data set, the mean and standard error of error rates in the 50 repetitions are summarized in Table \ref{tab:real_1nn}. From Table \ref{tab:real_1nn}, using $k$-NN we obtain slightly smaller error rate on average.
	    
		\begin{table}[!ht]
		\centering
	    \begin{tabular}{|c|c|c|}
	         \hline Data Set& Error Rate ($k$-NN) &  Error Rate (Pre-1NN)  \\\hline
	         
	        Credit & 0.18879 & 0.1899 \\\hline  
	         HTRU2 &  0.02143 & 0.0221\\\hline
	    \end{tabular}\\
	    \caption{Comparison between $k$-NN and Pro-processed 1NN (Pre-1NN) in HTRU2 and Credit}
	    \label{tab:real_1nn}
	\end{table}

	\section{Conclusion and Discussion}
	In this work, we conduct asymptotic regret analysis of $k$-NN classification for randomly perturbed testing data. In particular, a phase transition phenomenon is observed: when the corruption level is below a threshold order, it doesn't affect the asymptotic regret; when the corruption level is beyond this threshold order, the asymptotic regret grows polynomially. Moreover, when the level of corruption is small, there is no benefit to perform noise injected adversarial training approach.
	
	Moreover, using the idea of random perturbation, we can further explain why pre-processed 1NN converges in a sub-optimal rate: it can be treated as $k$-NN with perturbation in testing data while $\omega$, the distance from $x$ to its nearest neighbor, is large when $d>4$. It is worth to mention that our analysis can be applied to Distributed-NN to verify the optimal rate obtained in \cite{duan2018DNN} as well.
    
    An interesting observation from numerical experiment is that using traditional $k$-NN leads to an even better performance than the $k$-NN trained via noise injection method. This observation contradicts to common belief that injecting attack into training algorithm to obtain an adversarial robust algorithm (e.g. optimization method in \citealp{sinha2018certifying}). 
    Therefore, it deserves further theoretical investigation to understand that, under which circumstance one can indeed benefit from the noise injection strategy. 

	\bibliographystyle{asa}
	\bibliography{VaRHDIS}

\newpage
\appendix
\section{Comparison between Random Perturbation and Non-random Perturbation}\label{sec:attack}
We use the following adversarial attack as the non-random perturbation:
	\begin{equation}\label{eqn:attack}
		\widetilde{x}=\begin{cases}
		\argmin\limits_{z\in B(x,\omega)} \eta(z)\qquad \mbox{if }\eta(x)>1/2\\
		\argmax\limits_{z\in B(x,\omega)} \eta(z)\qquad \mbox{if }\eta(x)\leq 1/2
		\end{cases}.
		\end{equation} 
	
	When $\omega\rightarrow 0$, if $\eta$ is differentiable, the length o attack converges to $\omega$ as well.
	
	The proposed attack scheme (\ref{eqn:attack}) is also called as ``white-box attack'' as the adversary has the knowledge of $\eta(x)$. On the other hand, unlike the ``white-box attack'' mentioned in \cite{wang2017analyzing}, the perturbation and attack we focus on are independent with the training samples.

\begin{theorem}\label{thm:target}
	Under [A.1] to [A.3], if testing data is adversarially attacked and $1/\sqrt{k},\zeta \ll \omega$, then 
		\begin{equation}\label{eqn:target}
		\begin{split}
		&\mbox{Regret}(k,n,\omega)=\frac{B_1}{4k}+\frac{1}{2}\int_{\mathcal{S}}\frac{\|\dot{\Psi}(x_0)\|}{\|\dot{\eta}(x_0)\|^2} \left(b(x_0)^2\zeta(x_0)^2+2\omega^2\|\dot{\eta}(x_0)\|^2\right) d\text{Vol}^{d-1}(x_0)+Rem,
		\end{split}
		\end{equation}
		where  $Rem:=O(\omega/\sqrt{k}+\omega\zeta)+o((1/k)\vee(\zeta+\omega)^2)$.
	\end{theorem}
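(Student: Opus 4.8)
The plan is to run the same argument as in the proof of Theorem~\ref{thm:noise}, with the treatment of the zero-mean random perturbation replaced by that of the deterministic worst-case shift in \eqref{eqn:attack}. First I would write the excess risk in boundary form,
\[
\mbox{Regret}(k,n,\omega)=\mathbb{E}\big[\,|2\eta(X)-1|\,\mathbf{1}\{\widehat g_{n,k}(\widetilde X)\neq g(X)\}\,\big],
\]
and note that, by A.1--A.3, the contribution of $\{x:|\eta(x)-1/2|>c_x\}$ is exponentially small (there $\widehat\eta_{k,n}(\widetilde x)-\eta(\widetilde x)=o_p(1)$ and $\eta(\widetilde x)-\eta(x)=O(\omega)$), so only a shrinking neighborhood of $\mathcal S$ matters. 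On that neighborhood I would use the standard decomposition
\[
\widehat\eta_{k,n}(\widetilde x)-\tfrac12=\underbrace{\big(\widehat\eta_{k,n}(\widetilde x)-\mathbb{E}_{\mathrm{tr}}\widehat\eta_{k,n}(\widetilde x)\big)}_{\text{estimation noise}}+\underbrace{\big(\mathbb{E}_{\mathrm{tr}}\widehat\eta_{k,n}(\widetilde x)-\eta(\widetilde x)\big)}_{\text{smoothing bias}}+\underbrace{\big(\eta(\widetilde x)-\eta(x)\big)}_{\text{corruption}}+\underbrace{\big(\eta(x)-\tfrac12\big)}_{\text{signal}},
\]
where, exactly as for Theorem~\ref{thm:noise}, the estimation noise is controlled by a Berry--Esseen approximation to a $N(0,\eta(\widetilde x)(1-\eta(\widetilde x))/k)$ law (so $N(0,1/(4k))$ to leading order near $\mathcal S$), and the smoothing bias equals $b(\widetilde x)t(\widetilde x)+o(t)$ by the fourth-order Taylor expansion licensed by A.3 (this is what $\zeta(\cdot)$ tracks in the statement).

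The genuinely new step is the corruption term. For $x$ within $c_x$ of $\mathcal S$, A.3 gives $\dot{\eta}(x)\neq 0$ and a bounded fourth derivative, so the constrained optimiser in \eqref{eqn:attack} is $\widetilde x=x-\operatorname{sign}\!\big(\eta(x)-\tfrac12\big)\,\omega\,\dot{\eta}(x)/\|\dot{\eta}(x)\|+O(\omega^2)$, hence
\[
\eta(\widetilde x)-\tfrac12=\big(\eta(x)-\tfrac12\big)-\operatorname{sign}\!\big(\eta(x)-\tfrac12\big)\,\omega\|\dot{\eta}(x)\|+O(\omega^2).
\]
Writing $x=x_0+s\,\nu(x_0)$ with $x_0\in\mathcal S$, $\nu=\dot{\Psi}/\|\dot{\Psi}\|$ (which is parallel to $\dot{\eta}$ on $\mathcal S$, since $\mathcal S$ is a common level set of $\eta$ and $\Psi=\bar f(1-2\eta)$) and $s$ the signed distance, so $\eta(x)-\tfrac12=s\|\dot{\eta}(x_0)\|+O(s^2)$, the post-attack deterministic signal is $\operatorname{sign}(s)(|s|-\omega)\,\|\dot{\eta}(x_0)\|+O(\omega^2+s^2)$. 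Thus every $x$ with $|s|<\omega$, minus an $O(1/\sqrt k)$-wide transition strip near $|s|=\omega$, is mis-classified with probability $1-o(1)$, because this shift already has the wrong sign and dominates the $O(1/\sqrt k)$ noise and $O(\zeta)$ bias under the hypothesis $1/\sqrt k,\zeta\ll\omega$; for $|s|$ somewhat larger than $\omega$ the flip probability is $\Phi\big(-(|s|-\omega)\|\dot{\eta}(x_0)\|/\sigma_k\mp b(x_0)t(x_0)/\sigma_k\big)+o(1)$ with $\sigma_k^2=1/(4k)$, the sign depending on the side of $\mathcal S$.

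It then remains to assemble the one-dimensional integral. Using $|2\eta(x)-1|\,\bar f(x)=|\Psi(x)|$ and $\Psi(x_0+s\nu(x_0))=s\,\|\dot{\Psi}(x_0)\|+O(s^2)$, the coarea formula reduces the regret to $\int_{\mathcal S}\|\dot{\Psi}(x_0)\|\big[\int |s|\,P^{\mathrm{flip}}_{s}\,ds\big]\,d\text{Vol}^{d-1}(x_0)+(\text{h.o.t.})$. Substituting $u=|s|-\omega$ and using $1/\sqrt k,\zeta\ll\omega$ to extend the $u$-integration to $\mathbb{R}$ up to exponentially small error, the ``sure mis-classification'' band $|s|<\omega$ yields $\int_{\mathcal S}\|\dot{\Psi}(x_0)\|\,\omega^2\,d\text{Vol}^{d-1}(x_0)=\tfrac12\int_{\mathcal S}\frac{\|\dot{\Psi}(x_0)\|}{\|\dot{\eta}(x_0)\|^2}\,2\omega^2\|\dot{\eta}(x_0)\|^2\,d\text{Vol}^{d-1}(x_0)$, the Corruption term of \eqref{eqn:target}; the transition strip near $|s|=\omega$, handled by the same Gaussian integral as in the proof of Theorem~\ref{thm:noise}, contributes the $B_1/(4k)$ variance term and the $\tfrac12\int_{\mathcal S}\frac{\|\dot{\Psi}(x_0)\|}{\|\dot{\eta}(x_0)\|^2}b(x_0)^2\zeta(x_0)^2\,d\text{Vol}^{d-1}(x_0)$ bias term; and the residuals---the Berry--Esseen error accumulated over the width-$O(\omega)$ band, the cross products of the $O(\omega)$ corruption scale with the $O(1/\sqrt k)$ noise and $O(\zeta)$ bias scales, and the $o$-order leftovers of the fourth-order Taylor expansion and of the $O(\omega^2)$ error in locating $\widetilde x$---collapse to exactly $Rem=O(\omega/\sqrt k+\omega\zeta)+o\big((1/k)\vee(\zeta+\omega)^2\big)$.

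The main obstacle I expect is the remainder bookkeeping rather than any new idea. Unlike the random-perturbation case, $\widetilde x$ depends on $x$ through $\operatorname{sign}(\eta(x)-1/2)$, which is discontinuous across $\mathcal S$, so the smoothing bias $b(\widetilde x)t(\widetilde x)$ and the local variance $\eta(\widetilde x)(1-\eta(\widetilde x))/k$ must be expanded separately on the two sides of $\mathcal S$ and shown to agree with the $x_0$-values to the order needed; one must also confirm that the curvature of $\mathcal S$, the $O(\omega^2)$ error in $\widetilde x$, and the non-uniformity of the Gaussian approximation only ever feed into $Rem$, and that the truncation to a neighborhood of $\mathcal S$ (justified by A.2) loses nothing at the claimed precision. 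Keeping every discarded term inside $O(\omega/\sqrt k+\omega\zeta)+o((1/k)\vee(\zeta+\omega)^2)$ is delicate precisely in the regime where $\omega$ is only marginally larger than $\zeta$ or $1/\sqrt k$.
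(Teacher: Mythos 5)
Your proposal is correct and follows essentially the same route as the paper's proof: the identical tube/Berry--Esseen machinery of Theorem 1, with the single new ingredient being the attack-induced mean shift $\mp\omega\|\dot{\eta}(x_0)\|$ toward the decision boundary (the paper's modified $\mu_{k,n,\omega}(x_0^t)$). Your evaluation of the final one-dimensional integral as a deterministic mis-classification band $|s|<\omega$ plus a Gaussian transition strip at $|s|=\omega$ is just a re-bracketing, via the same substitution $u=|s|-\omega$, of the paper's ``symmetric Gaussian integral plus positive correction $r_5$''.
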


From Theorem \ref{thm:target}, one can see that the regret under adversarial attack is larger than the one under random perturbation if the $\omega^2$ term is dominant. 

\section{Proof of Regret Analysis in Section 2 and 3}
	\subsection{Theorem 1}
	This section contains the proof of Theorem 1 and Theorem S.\ref{thm:target}. The two proofs are similar, so for proof of Theorem S.\ref{thm:target}, we only present the part where the proof is different from Theorem 1.
	
	Define $R_{1}(x)$ to $R_{k}(x)$ as the unsorted distance from the  nearest $k$ neighbors to testing data point $x$, and $R_{k+1}(x)$ as the distance from the exact $(k+1)$-th nearest neighbor to $x$ itself. Similar as \cite{CD14}, conditional on the distance of the $(k+1)$-th neighbor, the first $k$ neighbors are i.i.d. random variables distributed within $B(x,R_{k+1}(x))$.
	
	In addition to $f_1$, $f_2$, and $\Psi$, we further denote $\bar{f}(x)$ as the density of $X$.

		\begin{proposition}[Lemma S.1 in \cite{sun2016stabilized}]\label{S.1}
			For any distribution function $G$ with density $g$,
			\begin{eqnarray*}
				\int_{\mathbb{R}} [G(-bu-a)-1_{\{ u<0 \}}]du &=& -\frac{1}{b}\left\{ a+\int_{\mathbb{R}}tg(t)dt\right\},\\
				\int_{\mathbb{R}} u[G(-bu-a)-1_{\{ u<0 \}}]du &=& \frac{1}{b^2}\left\{  \frac{a^2}{2}+\frac{1}{2}\int_{\mathbb{R}}t^2g(t)dt +a\int_{\mathbb{R}}tg(t)dt\right\}.
			\end{eqnarray*}
		\end{proposition}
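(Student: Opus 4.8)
The plan is to prove both identities by the same change of variables $t=-bu-a$ followed by a Fubini--Tonelli swap that reduces everything to the moments $\int t\,g(t)\,dt$ and $\int t^2 g(t)\,dt$. Throughout I assume $b>0$, which is implicit in the intended application (it is a Berry--Esseen normalization): when $b<0$ the integrand $G(-bu-a)-1_{\{u<0\}}$ fails to vanish as $u\to+\infty$ and neither integral converges. It is also useful to keep a probabilistic picture in mind: if $T\sim G$ and $S:=-(T+a)/b$, then $G(-bu-a)=P(T\le -bu-a)=P(S\ge u)=P(S>u)$ by continuity of $G$, so the first integrand equals $P(S>u)$ for $u\ge 0$ and $-P(S\le u)$ for $u<0$, and is absolutely integrable precisely when $G$ has a finite first moment (and, with the extra weight $u$, a finite second moment).

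For the first identity, substituting $u=-(t+a)/b$, $du=-b^{-1}\,dt$ (the Jacobian sign absorbs the reversal of the limits) sends $1_{\{u<0\}}$ to $1_{\{t>-a\}}$ and gives $\tfrac1b\int_{\mathbb R}[G(t)-1_{\{t>-a\}}]\,dt$. I would split this at $t=-a$, use $G(t)=\int_{-\infty}^t g(s)\,ds$ on $(-\infty,-a)$ and $G(t)-1=-\int_t^\infty g(s)\,ds$ on $(-a,\infty)$, and swap the order of integration; the inner integrals are $\int_s^{-a}dt=-a-s$ and $\int_{-a}^{s}dt=s+a$, so the expression collapses to $-\int_{\mathbb R}(t+a)g(t)\,dt=-(a+\int t\,g(t)\,dt)$, which is the claimed value. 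Equivalently, in the probabilistic language, the integral is $\int_0^\infty P(S>u)\,du-\int_{-\infty}^0 P(S\le u)\,du=\mathbb E S=-(a+\mathbb E T)/b$.

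For the second identity I would run the same substitution, now carrying the weight $u=-(t+a)/b$, reducing $\int_{\mathbb R} u\,[G(-bu-a)-1_{\{u<0\}}]\,du$ to $-b^{-2}\int_{\mathbb R}(t+a)\,[G(t)-1_{\{t>-a\}}]\,dt$. Splitting at $t=-a$ and applying Fubini again, the inner integrals are now $\int_s^{-a}(t+a)\,dt=-\tfrac12(s+a)^2$ on $(-\infty,-a)$ and $\int_{-a}^{s}(t+a)\,dt=\tfrac12(s+a)^2$ on $(-a,\infty)$, so the expression collapses to $-\tfrac12\int_{\mathbb R}(t+a)^2 g(t)\,dt$, and the original integral equals $\tfrac{1}{2b^2}\int_{\mathbb R}(t+a)^2 g(t)\,dt=\tfrac{1}{b^2}\{\tfrac{a^2}{2}+a\int t\,g(t)\,dt+\tfrac12\int t^2 g(t)\,dt\}$, as stated. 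Probabilistically this is $\tfrac12\mathbb E[S^2]$, obtained from the layer-cake identity $\int_0^\infty u\,P(S>u)\,du=\tfrac12\mathbb E[(S^+)^2]$ together with its negative-axis counterpart.

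The only genuine point requiring care is the bookkeeping that legitimizes the change of variables and the Fubini swaps, namely verifying absolute integrability of the integrands; this is exactly where the (implicit) finiteness of the first, respectively second, moment of $G$ is used, and once that is granted both identities follow from the elementary manipulations above. I anticipate no deeper obstacle.
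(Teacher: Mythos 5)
Your proof is correct: the substitution $t=-bu-a$ (with the implicit $b>0$), the split at $t=-a$, and the Fubini swap yield exactly the stated identities, and your remarks about the necessity of $b>0$ and of finite first/second moments for absolute convergence are accurate. Note that the paper itself offers no proof of this proposition --- it is imported verbatim as Lemma S.1 of the cited reference --- so your argument supplies a self-contained derivation where the paper has none; there is no in-paper proof to compare against.
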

		Now we start our proof of Theorem 1.
	\begin{proof}[Proof of Theorem 1]
	The idea of proof follows \cite{samworth2012optimal}, and there are total 5 steps in our proof:
	\begin{itemize}
	    \item \textit{Step 0:} We give some definitions prepare for the proof.
	    \item \textit{Step 1:} Given a fixed (unobserved) testing sample $x$ and conditional on the perturbation random variable $\delta$, we obtain the mean and variance of $\widehat{\eta}_{k,n}(x+\delta)$. In particular, for any $x_0$ satisfying $\eta(x_0)=1/2$, let $x^t_0=x_0+t\frac{\dot{\eta}(x_0)}{\|\dot{\eta}(x_0)\|}$, we have that 
	    \begin{eqnarray*}
	    \mathbb{E}[\widehat{\eta}_{k,n}(x_0^t+\delta)|\delta]=\eta(x_0) + t\|\dot{\eta}(x_0)\| + \delta^{\top}\dot{\eta}(x_0)+ b(x_0)R_1^2+ O(t^2+\omega^2+R_1^4),
	    \end{eqnarray*}
	    and
	    \begin{eqnarray*}
	    Var(\widehat{\eta}_{k,n}(x_0^t+\delta)|\delta)=\frac{1}{4k}+O(\epsilon^2/k):=\frac{1}{s_{k,n}^2}+O(\epsilon^2/k).
	    \end{eqnarray*}
	    \item \textit{Step 2:} use tube theory to construct a tube based on $\mathcal{S}$. The remainder of regret outside the tube is of $O(\epsilon^3)$ for some $\epsilon$:
	    \begin{eqnarray*}
	            2*\mbox{Regret}&=
				&\int_{\mathbb{R}^d} \left( P\left( \sum_{i=1}^k \frac{1}{k}Y_i\leq \frac{1}{2}\bigg|\delta \right)-1_{\{ \eta(x)<1/2 \}} \right)dP(x) \nonumber\\&=& \int_{\mathcal{S}} \int_{-\epsilon_{k,n,\omega}}^{\epsilon_{k,n,\omega}} t\|\dot{\Psi}(x_0)\| \mathbb{E}\left( P(\widehat{\eta}_{k,n}(x_0^t+\delta)<1/2) -1_{\{t<0 \}} \right)dtd\text{Vol}^{d-1}(x_0)+r_1.
			\end{eqnarray*}
		for some remainder term $r_1$.
	    \item \textit{Step 3:} use Berry-Esseen Theorem to transform the probability $$P(\widehat{\eta}_{k,n}(x_0^t+\delta)<1/2)$$ to a Gaussian probability.
	    \begin{equation}\label{eqn:gaussian}
	        \Phi\left( \frac{ \mathbb{E}(1/2-\widehat{\eta}_{k,n}(x_0+\delta))}{\sqrt{ Var(\widehat{\eta}_{k,n}(x_0^t+\delta))}}\bigg|\delta\right)+o
	    \end{equation}
	    \item \textit{Step 4:}  plug in the mean and variance of $\widehat{\eta}_{k,n}$ from \textit{Step 1} into (\ref{eqn:gaussian}), integrate in the formula in \textit{Step 2} on the tube (integrate over $t$).
	\end{itemize}
	
	\textit{Step 0:} In this step we introduce some definitions.

			Denote $\delta$ as the random perturbation, i,e., $\delta=\widetilde{x}-x$. Denote $X_1(x)$ to $X_k(x)$ be the $k$ unsorted neighbors of $x$ in the training samples and $Y_i(x)$ be the $Y$ value for the corresponding $X_i(x)$. When no confusion is caused, we drop the argument $x$ and use $X_i$ and $Y_i$ for abbreviation. Then the probability of classifying contaminated sample as 0 becomes
			\begin{eqnarray*}
				P\left( \widehat{\eta}_{k,n}(x+\delta)\leq \frac{1}{2}\bigg|\delta \right) &=& P\left( \sum_{i=1}^k(Y_i(x+\delta)-1/2)<0 \bigg|\delta\right).
			\end{eqnarray*}

			If we directly investigate in $P(\widehat{\eta}_{k,n}(x)\leq 1/2)$, the possible values of $\eta(x)$ can be $[0,1]$, but those $\eta(x)$'s which are far from $1/2$ will have little contribution on the regret. Hence we consider $x$ in $\mathcal{S}^{\epsilon}$ where
			\begin{eqnarray*}
                \mathcal{S}^{\epsilon}=\left\{ x_0^t:=x_0+ t\frac{\dot{\eta}(x_0)}{\|\dot{\eta}(x_0)\|}: x_0\in\mathcal{S}, |t|\leq \epsilon \right\}
			\end{eqnarray*}
			for $\epsilon=\epsilon_{k,n,\omega}$.
			
			Further define
			\begin{equation*}
			    \epsilon_{k,n,\omega}=\max_{x\in\mathcal{\mathcal{R}}} C\left(\frac{\log k}{\sqrt{k}}\vee (R_1^2(x)+\omega)\right)
			\end{equation*}
			for some large constant $C$.
			
		\textit{Step 1}: For the scenario of random perturbation, $\delta$ is a random variable uniformly distributed on sphere $B(x_0^t,\omega)$, we first evaluate $\mathbb{E}(\widehat{\eta}_{k,n}(x_0^t+\delta))$ and $Var(\widehat{\eta}_{k,n}(x_0^t+\delta))$ for given $x_0$ and $\delta$.
		\begin{equation*}
		\begin{split}
		 &\mathbb{E}[\widehat{\eta}_{k,n}(x_0^t+\delta)|\delta]=\mathbb{E}(Y_1(x_0^t+\delta)|\delta)=\mathbb{E}\eta(X_1|\delta)\\=& 
				\mathbb{E} \left(\eta(x_0^t+\delta)+ (X_1-x_0^t-\delta)^{\top} \dot{\eta}(x_0^t+\delta) + 1/2(X_1-x_0^t-\delta)^{\top}\ddot{\eta}(x_0^t+\delta) (X_1-x_0^t-\delta)\bigg|\delta \right)\\
				&+rem ,
			\end{split}	
			\end{equation*}
		where $rem$ is a remainder term due to the Taylor's expansion. Before discussing $rem$, we consider the dominant part of $\mathbb{E}\widehat{\eta}_{k,n}(x_0^t+\delta)$. Conditional on $\delta$ and $R_1(x_t^0+\delta)=\|X_1-x_0^t-\delta\|$, then the distribution of $X_1$ is on the sphere of $B(x_t^0+\delta, R_1)$.
			Denote the density of this distribution as $\bar{f}(x|x_0^t+\delta,R_1(x_0^t+\delta))$. 
			Also define $\bar{f}'(x|x_0^t+\delta,R_1(x_0^t+\delta))$ as the gradient of $\bar{f}(x|x_0^t+\delta,R_1(x_0^t+\delta))$.
			For simplicity, rewrite $R_1(x_0^t+\delta)$ as $R_1$. Then based on (A.1) and (A.3) for the smoothness of $\bar{f}$ and $\eta$, rewrite $ \bar{f}(x|x_0^t+\delta,R_1)$ as a Taylor expansion at $x_0^t+\delta$, and we have
			\begin{eqnarray*}
				&&\mathbb{E}((X_1-x_0^t-\delta)^{\top}\dot{\eta}(x_0^t+\delta) |\delta,R_1)\\ &=& \int_{\partial B} (x-x_0^t-\delta)^{\top}\dot{\eta}(x_0^t+\delta) \bar{f}(x|x_0^t+\delta,R_1)dx\\
				&=& \int_{\partial B} (x-x_0^t-\delta)^{\top}\dot{\eta}(x_0^t+\delta) \bigg[\bar{f}(x_0^t+\delta|x_0^t+\delta,R_{1})\\&&\qquad\qquad\qquad\qquad+\bar{f}'(x_0^t+\delta|x_0^t+\delta,R_{1})^{\top}(x-x_0^t-\delta)\\&&\qquad\qquad\qquad\qquad+\frac{1}{2}(x-x_0^t-\delta)^{\top}\bar{f}''(x_0^t+\delta|x_0^t+\delta,R_{1})(x-x_0^t-\delta) \\&&\qquad\qquad\qquad\qquad+ O(\|x-x_0^t-\delta\|_2^3)\bigg]dx
				\\
				&=& \int_{\partial B} (x-x_0^t-\delta)^{\top}\dot{\eta}(x_0^t+\delta) \bar{f}'(x_0^t+\delta|x_0^t+\delta,R_{1})^{\top}(x-x_0^t-\delta)dx+o\\
				&=& tr\left( \dot{\eta}(x_0^t+\delta) \bar{f}'(x_0^t+\delta|x_0^t+\delta,R_{1})^{\top} \int_{\partial B}(x-x_0^t-\delta) (x-x_0^t-\delta)^{\top}dx \right)+O(R_1^4),
			\end{eqnarray*}
			where $\int_{\partial B}$ denotes integration over sphere $\partial B(x_0^t+\delta,R_1)$  the first-order and third-order terms becomes 0.
			
			In addition,
			\begin{eqnarray*}
				&&tr\left(  \frac{1}{2}\ddot{\eta}(x_0^t+\delta)\mathbb{E} \left( (X_1-x_0^t-\delta)(X_1-x_0^t-\delta)^{\top}|R_{1}\right) \right)\\
				&=& tr\left(  \frac{1}{2}\ddot{\eta}(x_0^t+\delta)\int_{\partial B} (x-x_0^t-\delta)(x-x_0^t-\delta)^{\top} \bar{f}(x|x_0^t+\delta,R_{1})dx\right)\\
				&=& tr\left(  \frac{\bar{f}(x_0^t+\delta|x_0^t+\delta,R_{1})}{2}\ddot{\eta}(x_0^t+\delta)\int_{\partial B} (x-x_0^t-\delta)(x-x_0^t-\delta)^{\top} dx\right)\\&&+tr\left( \frac{1}{2} \ddot{\eta}(x_0^t+\delta)\int_{\partial B} (x-x_0^t-\delta)(x-x_0^t-\delta)^{\top}(x-x_0^t-\delta)^{\top}\bar{f}'(x_0^t+\delta|x_0^t+\delta,R_{1}) dx\right)\\&&+O(R_1^4).
			\end{eqnarray*}
    The term $rem$ in $\mathbb{E}(\widehat{\eta}_{k,n})$ can be tackled in a similar manner and $rem=O(R_1^4)$. Hence taking 
	    \begin{equation*}
	        b(x)=\frac{1}{\bar{f}(x)d}\left\{ \sum_{j=1}^d [\dot{\eta}_j(x)\dot{\bar{f}}_j(x)+\ddot{\eta}_{j,j}(x)\bar{f}(x)/2 ] \right\},
	    \end{equation*}
	    we have
			\begin{eqnarray*}
				\mathbb{E}(\widehat{\eta}_{k,n}|\delta,R_1)&=& \eta(x_0^t+\delta)+b(x_0^t+\delta)R_{1}^2+O(R_1^4)\\&=&\eta(x_0)+\frac{t}{\|\dot{\eta}(x_0)\|}\dot{\eta}(x_0)^{\top}\dot{\eta}(x_0) + \delta^{\top}\dot{\eta}(x_0) + O(t^2+\omega^2) \\&&+b(x_0)R_1^2 + R_1^2\frac{t}{\|\dot{\eta}(x_0)\|} \dot{\eta}(x_0)^{\top}\dot{b}(x_0) + R_1^2 \delta^{\top}\dot{b}(x_0)+O(R_1^4)\\
				&=& \eta(x_0) + t\|\dot{\eta}(x_0)\| + \delta^{\top}\dot{\eta}(x_0)+ b(x_0)R_1^2+ O(t^2+\omega^2+R_1^4).
			\end{eqnarray*}
			
		Denote $t_{k,n}(x_0^t+\delta)=\mathbb{E}R_1^2$, using arguments in Lemma 1 and Theorem 2 of \cite{xing2018statistical}, take $a_d= 2^d\Gamma(1 + 1/2)^d /\Gamma(1 + d/2)$, we obtain
		\begin{eqnarray*}
		  t_{k,n}(x_0^t+\delta)&=&\frac{1}{a_d^{2/d}\bar{f}(x_0^t+\delta)^{2/d}} \left(\frac{k}{n}\right)^{2/d} +o(t_{k,n}^2(x_0^t+\delta))\\&=& t_{k,n}(x_0)+ O\left(t\left(\frac{k}{n}\right)^{2/d} \right) + O\left(\omega\left(\frac{k}{n}\right)^{2/d} \right)+o(t_{k,n}^2(x_0^t+\delta)).
		\end{eqnarray*}
    	Further denote ${\mu}_{k,n,\omega}(x_0^t,\delta)=\eta(x_0)+t\|\dot{\eta}(x_0)\|+\delta^{\top}\dot{\eta}(x_0)+b(x_0)t_{k,n}(x_0)$, we obtain
			\begin{eqnarray*}
				\mathbb{E}(\widehat{\eta}_{k,n})={\mu}_{k,n,\omega}(x_0^t,\delta)+O(t^2+\omega^2+t_{k,n}^2)={\mu}_{k,n,\omega}(x_0^t,\delta)+O(\epsilon_{k,n,\omega}^2).
			\end{eqnarray*}

		In terms of $Var(\widehat{\eta}_{k,n}(x_0^t,\delta))$, fixing $R_{k+1}$, the $k$ neighbors are i.i.d. random variables in $B(x_0^t+\delta,R_{k+1})$,
		\begin{eqnarray*}
    Var(Y_1|R_{k+1},\delta)=\mathbb{E}(Y_1|R_{k+1},\delta)(1-\mathbb{E}(Y_1|R_{k+1}\delta))=\frac{1}{4}+O \left(\epsilon_{k,n,\omega}^2\right),
    \end{eqnarray*}
    when $R_{k+1}^2=O(t_{k,n}(x_0))$. Moreover, as \cite{CD14} and \cite{belkin2018overfitting} mentioned, the probability of $R_{k+1}\gg t_{k,n}(x_0)$ is an exponential tail, hence the overall variance becomes
    \begin{equation*}
        Var(Y_1|\delta) = \frac{1}{4}+O \left(\epsilon_{k,n,\omega}^2\right).
    \end{equation*}
This also implies that
\begin{equation*}
    |\sqrt{Var(Y_1|\delta)}-\sqrt{1/4}|= O(\epsilon_{k,n,\omega}).
\end{equation*}

	    \textit{Step 2:} 
		Since the density of $x$ is 0 when $x$ is not in support, $$\int_{\mathbb{R}^d} \left( P\left( \sum_{i=1}^k \frac{1}{k}Y_i\leq \frac{1}{2} \right)-1_{\{ \eta(x)<1/2 \}} \right)dP(x)$$ is equal to $$\int_{\mathcal{R}} \left( P\left( \sum_{i=1}^k \frac{1}{k}Y_i\leq \frac{1}{2} \right)-1_{\{ \eta(x)<1/2 \}} \right)dP(x),$$ where $\mathcal{R}$ is the support of $X$. Taking $\epsilon_{k,n,\omega}\geq-s_{k,n}\log s_{k,n}$, we have
			\begin{eqnarray}\label{eqn:tube}
				&&\int_{\mathbb{R}^d} \left( P\left( \sum_{i=1}^k \frac{1}{k}Y_i\leq \frac{1}{2}\bigg|\delta \right)-1_{\{ \eta(x)<1/2 \}} \right)dP(x) \nonumber\\&=& \int_{\mathcal{S}} \int_{-\epsilon_{k,n,\omega}}^{\epsilon_{k,n,\omega}} t\|\dot{\Psi}(x_0)\|  P(\widehat{\eta}_{k,n}(x_0^t+\delta)<1/2) -1_{\{t<0 \}} dtd\text{Vol}^{d-1}(x_0)+r_1.
			\end{eqnarray}
			
			 The result in  (\ref{eqn:tube}) adopts tube theory to transform the integration from $\mathbb{R}^d$ to $\mathbb{R}\times \mathcal{S}$. Denote the map $\phi\left(x_0,t\frac{\dot{\eta}(x_0)}{\|\dot{\eta}(x_0)\|}\right)=x_0^t$, then the pullback of the $d$-form $dx$ is given at $(x_0,t{\dot{\eta}(x_0)}/{\|\dot{\eta}(x_0)\|})$ by
			\begin{eqnarray*}
			det\left(\dot{\Psi}\left(x_0,t\frac{\dot{\eta}(x_0)}{\|\dot{\eta}(x_0)\|}\right)\right)dtd\text{Vol}^{d-1}(x_0).
			\end{eqnarray*}

			For $r_1$, it is composed of four parts: (1) the integral outside $\mathcal{S}^{\epsilon_{k,n,\omega}}$, (2) the difference between $\Psi(t)$ and $t\|\dot{\Psi}(x)\|$, (3) the difference between $\mathcal{S}^{\epsilon_{k,n,\omega}}$ and the tube generated using $\mathcal{S}$, and (4) the remainder of $det\left(\dot{\Psi}\left(x_0,t\frac{\dot{\eta}(x_0)}{\|\dot{\eta}(x_0)\|}\right)\right)$:
			\begin{equation}\begin{split}
				&r_1\\
				=&\int_{\mathbb{R}^d\backslash \mathcal{S}^{\epsilon_{k,n,\omega}}} \left( P\left( \sum_{i=1}^k \frac{1}{k}Y_i\leq \frac{1}{2}  \bigg|\delta\right)-1_{\{ \eta(x)<1/2 \}} \right)dP(x)\\
				+&\int_{\mathcal{S}^{\epsilon_{k,n,\omega}}} ( \Psi(x)- t\|\dot{\Psi}(x)\| )\left( P\left( \sum_{i=1}^k \frac{1}{k}Y_i\leq \frac{1}{2}  \bigg|\delta\right)-1_{\{ \eta(x)<1/2 \}} \right)dx\\
				+&\bigg[\int_{\mathcal{S}^{\epsilon_{k,n,\omega}}}  t\|\dot{\Psi}(x)\| \left( P\left( \sum_{i=1}^k \frac{1}{k}Y_i\leq \frac{1}{2}  \bigg|\delta\right)-1_{\{ \eta(x)<1/2 \}} \right)dx\\
				& \qquad- \int_{\mathcal{S}} \int_{-\epsilon_{k,n,\omega}}^{\epsilon_{k,n,\omega}} t\|\dot{\Psi}(x_0)\| \left( P(\widehat{\eta}_{k,n}(x_0^t+\delta)<1/2) -1_{\{t<0 \}} \right)det\left(\dot{\Psi}\left(x_0,t\frac{\dot{\eta}(x_0)}{\|\dot{\eta}(x_0)\|}\right)\right)dtd\text{Vol}^{d-1}(x_0)\bigg]\\
				+&\int_{\mathcal{S}} \int_{-\epsilon_{k,n,\omega}}^{\epsilon_{k,n,\omega}} t\|\dot{\Psi}(x_0)\| \left( P(\widehat{\eta}_{k,n}(x_0^t+\delta)<1/2) -1_{\{t<0 \}} \right)\left[det\left(\dot{\Psi}\left(x_0,t\frac{\dot{\eta}(x_0)}{\|\dot{\eta}(x_0)\|}\right)\right)-1\right]dtd\text{Vol}^{d-1}(x_0)\\
				:=& r_{11}+r_{12}+r_{13}+r_{14}.
				\end{split}
			\end{equation}
			In $r_1$, $r_{12}$ is of $O(\epsilon_{k,n,\omega}^3)$ since $\Psi(x)-t\dot{\Psi}(x)=O(t^2)$; $r_{13}$ is of $O(\epsilon_{k,n,\omega}^3)$ since the difference of volume is of $O(\epsilon_{k,n,\omega}^2)$. For $r_{11}$ in $r_1$:
				\begin{eqnarray*}
					0&\geq &\int_{\mathbb{R}^d\backslash\mathcal{S}^{\epsilon_{k,n,\omega}}\cap \{x|\eta(x)<1/2  \}} \left( P\left(  \sum_{i=1}^k \frac{1}{k}Y_i\leq \frac{1}{2}\bigg|\delta \right)-1_{\{ \eta(x)<1/2 \}}  \right)dP(x)\\
					&=& \int_{\mathbb{R}^d\backslash\mathcal{S}^{\epsilon_{k,n,\omega}}\cap \{x|\eta(x)<1/2  \}} \left( P\left(  \sum_{i=1}^k \frac{1}{k}\left(Y_i-\frac{1}{2}\right)-\mathbb{E}\left(Y_1-\frac{1}{2}\right)\leq -\mathbb{E}\left(Y_1-\frac{1}{2}\right)\bigg|\delta\right)-1_{\{ \eta(x)<1/2 \}}  \right)dP(x)\\
					&=& -\int_{\mathbb{R}^d\backslash\mathcal{S}^{\epsilon_{k,n,\omega}}\cap \{x|\eta(x)<1/2  \}}  P\left(  \sum_{i=1}^k \frac{1}{k}\left(Y_i-\frac{1}{2}\right)-\mathbb{E}\left(Y_1-\frac{1}{2}\right)> -\mathbb{E}\left(Y_1-\frac{1}{2}\right)\bigg|\delta\right)dP(x).
				\end{eqnarray*}
				From the definition of $\epsilon_{k,n,\omega}$, we know that for any $\delta$, $\inf\limits_{x\in\mathbb{R}^d\backslash \mathcal{S}^{\epsilon_{k,n,\omega}}}|\mathbb{E}Y(\widetilde{x}_{\omega})-1/2|\geq c_1\epsilon_{k,n,\omega}$ for some $c_1>0$. Using Berstein inequality, we have an upper bound as
				\begin{eqnarray*}
	        	&&\int_{\mathbb{R}^d\backslash\mathcal{S}^{\epsilon_{k,n,\omega}}\cap \{x|\eta(x)<1/2  \}}  P\left(  \sum_{i=1}^k \frac{1}{k}(Y_i-\frac{1}{2})-\mathbb{E}(Y_1-\frac{1}{2})> -\mathbb{E}(Y_1-\frac{1}{2})\bigg|\delta\right)dP(x)\\
				    &\leq& O(\exp(-c_2k\epsilon_{k,n,\omega}^2))=o(1/k^{3/2}),
    	\end{eqnarray*}
    	for $c_2>0$.
    	
			Similar result can be obtained for $\mathbb{R}^d\backslash\mathcal{S}^{\epsilon_{k,n,\omega}}\cap \{x|\eta(x)> 1/2\}$.
			
			For $r_{14}$ in $r_1$,
			\begin{eqnarray*}
			&&\int_{\mathcal{S}} \int_{-\epsilon_{k,n,\omega}}^{\epsilon_{k,n,\omega}} t\|\dot{\Psi}(x_0)\| \mathbb{E}_{\delta}\left( P(\widehat{\eta}_{k,n}(x_0^t+\delta)<1/2) -1_{\{t<0 \}} \right)\\&&\qquad\qquad\qquad\left[det\left(\dot{\Psi}\left(x_0,t\frac{\dot{\eta}(x_0)}{\|\dot{\eta}(x_0)\|}\right)\right)-1\right]dtd\text{Vol}^{d-1}(x_0)\\
			&=& \int_{\mathcal{S}} \int_{-\epsilon_{k,n,\omega}}^{\epsilon_{k,n,\omega}} t\|\dot{\Psi}(x_0)\| \mathbb{E}_{\delta}\left( P(\widehat{\eta}_{k,n}(x_0+\delta)<1/2) -1_{\{t<0 \}} \right)\\&&\qquad\qquad\qquad\left[det\left(\dot{\Psi}\left(x_0,t\frac{\dot{\eta}(x_0)}{\|\dot{\eta}(x_0)\|}\right)\right)-1\right]dtd\text{Vol}^{d-1}(x_0)\\
			&&+\int_{\mathcal{S}} \int_{-\epsilon_{k,n,\omega}}^{\epsilon_{k,n,\omega}} t\|\dot{\Psi}(x_0)\| \left[t \frac{\dot{\eta}(x_0)^{\top}}{\|\dot{\eta}(x_0)\|} \frac{\partial}{\partial x_0}\mathbb{E}_{\delta}\left( P(\widehat{\eta}_{k,n}(x_0+\delta)<1/2) -1_{\{t<0 \}} \right)\right]\\&&\qquad\qquad\qquad\left[det\left(\dot{\Psi}\left(x_0,t\frac{\dot{\eta}(x_0)}{\|\dot{\eta}(x_0)\|}\right)\right)-1\right]dtd\text{Vol}^{d-1}(x_0)\\
			\\&&+o\\
			&=& \int_{\mathcal{S}} \int_{-\epsilon_{k,n,\omega}}^{\epsilon_{k,n,\omega}} t\|\dot{\Psi}(x_0)\| \left[t \frac{\dot{\eta}(x_0)^{\top}}{\|\dot{\eta}(x_0)\|} \frac{\partial}{\partial x_0}\mathbb{E}_{\delta}\left( P(\widehat{\eta}_{k,n}(x_0+\delta)<1/2) -1_{\{t<0 \}} \right)\right]\\&&\qquad\qquad\qquad\left[det\left(\dot{\Psi}\left(x_0,t\frac{\dot{\eta}(x_0)}{\|\dot{\eta}(x_0)\|}\right)\right)-1\right]dtd\text{Vol}^{d-1}(x_0)+o\\
			&=& O(\epsilon_{k,n,\omega}^4).
			\end{eqnarray*}
			Finally $r_1=O(\epsilon_{k,n,\omega}^3)$.

			\textit{Step 3: }we continue the derivation of $P(\widehat{\eta}_{k,n}(X+\delta)\leq 1/2,X\in\mathcal{S}^{\epsilon_{k,n,\omega}}|\delta)$ (where $X$ denotes testing sample random variable) using $$\int_{\mathcal{S}} \int_{-\epsilon_{k,n,\omega}}^{\epsilon_{k,n,\omega}} t\|\dot{\Psi}(x_0)\| \mathbb{E}_{\delta}\left( P(\widehat{\eta}_{k,n}(x_0^t+\delta)<1/2) -1_{\{t<0 \}} \right)dtd\text{Vol}^{d-1}(x_0).$$ Since $\widehat{\eta}_{k,n}$ is obtained from $n$ i.i.d. samples (though for some samples their weight is 0), by non-uniform Berry-Esseen Theorem,
			\begin{eqnarray*}
			&&\int_{\mathcal{S}}\int_{-\epsilon_{k,n,\omega}}^{\epsilon_{k,n,\omega}} t\|\dot{\Psi}(x_0)\| \left( P(\widehat{\eta}_{k,n}(x_0^t+\delta)<1/2) -1_{\{t<0 \}}|\delta \right)dtd\text{Vol}^{d-1}(x_0)\\&=& \int_{\mathcal{S}} \int_{-\epsilon_{k,n,\omega}}^{\epsilon_{k,n,\omega}} t\|\dot{\Psi}(x_0)\|\left( \Phi\left( \frac{ k\mathbb{E}(1/2-Y_1)}{\sqrt{kVar(Y_1)}}\bigg|\delta\right)-1_{\{t<0 \}} \right)dtd\text{Vol}^{d-1}(x_0) + r_2\\
			&&+\int_{\mathcal{S}}\int_{-\epsilon_{k,n,\omega}}^{\epsilon_{k,n,\omega}} t\|\dot{\Psi}(x_0)\| \left( \mathbb{E}_{R_{k+1}}\Phi\left( \frac{ k\mathbb{E}(1/2-Y_1|R_{k+1})}{\sqrt{kVar(Y_1|R_{k+1})}}\bigg|\delta\right)-\Phi\left( \frac{ k\mathbb{E}(1/2-Y_1)}{\sqrt{kVar(Y_1)}}\bigg|\delta\right) \right)dtd\text{Vol}^{d-1}(x_0).
			\end{eqnarray*}
			where
			\begin{eqnarray*}
				\left|P(\widehat{\eta}_{k,n}(x_0^t+\delta)<1/2)- \Phi\left( \frac{ k\mathbb{E}(1/2-Y_1)}{\sqrt{kVar(Y_1)}}\bigg|\delta\right)\right|&\leq& \frac{c_3}{\sqrt{k}}\frac{1}{    1+ k^{3/2}\left|\mathbb{E}1/2-Y_1\right|^3
				},
			\end{eqnarray*} hence
				\begin{eqnarray*}
					r_2&\leq&  \int_{\mathcal{S}} \int_{-\epsilon_{k,n,\omega}}^{\epsilon_{k,n,\omega}}t\|\dot{\Psi}(x_0)\| \frac{c_3}{\sqrt{k}}\frac{1}{    1+ k^{3/2}\left|\mathbb{E}1/2-Y_1\right|^3
					}dtd\text{Vol}^{d-1}(x_0)\\
					&=&\frac{c_4}{\sqrt{k}}\int_{\mathcal{S}} \int_{|t|<s_{k,n}} t\|\dot{\Psi}(x_0)\| dtd\text{Vol}^{d-1}(x_0)\\
					&&+\frac{c_5}{\sqrt{k}}\int_{\mathcal{S}} \int_{s_{k,n}<|t|<\epsilon_{k,n,\omega}}t\|\dot{\Psi}(x_0)\| \frac{1}{    1+ k^{3/2} t^3
					}dtd\text{Vol}^{d-1}(x_0)\\
					&\leq&O\left( \frac{ s^2_{k,n}}{\sqrt{k}}\right) +\frac{c_5}{k}\int_{\mathcal{S}} \int_{s_{k,n}<|t|<\epsilon_{k,n,\omega}}\sqrt{k}t\|\dot{\Psi}(x_0)\| \frac{1}{ k^{3/2} t^3
					}dt\frac{d\sqrt{k}}{d\sqrt{k}}d\text{Vol}^{d-1}(x_0)\\
					&=&O\left( \frac{ s^2_{k,n}}{\sqrt{k}}\right) +\frac{c_5}{k^{3/2}}\int_{\mathcal{S}} \int_{1<\sqrt{k}t<\epsilon_{k,n,\omega}\sqrt{k}}\sqrt{k}t\|\dot{\Psi}(x_0)\| \frac{1}{ k^{3/2} t^3
					}dt\frac{d\sqrt{k}}{d\sqrt{k}}d\text{Vol}^{d-1}(x_0)\\
					&=& O\left( \frac{1}{k^{3/2}}\right).
				\end{eqnarray*}
		   Following the analysis in \textit{Step 4}, we can also obtain that
		   \begin{eqnarray*}
		   &&\int_{\mathcal{S}}\int_{-\epsilon_{k,n,\omega}}^{\epsilon_{k,n,\omega}} t\|\dot{\Psi}(x_0)\| \left( \mathbb{E}_{R_{k+1}}\Phi\left( \frac{ k\mathbb{E}(1/2-Y_1|R_{k+1})}{\sqrt{kVar(Y_1|R_{k+1})}}\bigg|\delta\right)-\Phi\left( \frac{ k\mathbb{E}(1/2-Y_1)}{\sqrt{kVar(Y_1)}}\bigg|\delta\right) \right)dtd\text{Vol}^{d-1}(x_0)\\&=&O(\epsilon_{k,n,\omega}^5\sqrt{k}) + O(\epsilon_{k,n,\omega}^3).
		   \end{eqnarray*}
		    
		\textit{Step 4:} Finally we integrate on Gaussian probabilities:
			\begin{eqnarray*}
				&& \int_{\mathcal{S}} \int_{-\epsilon_{k,n,\omega}}^{\epsilon_{k,n,\omega}} t\|\dot{\Psi}(x_0)\|\left( \Phi\left( \frac{ k\mathbb{E}(1/2-Y_1)}{\sqrt{kVar(Y_1)}}\bigg|\delta\right)-1_{\{t<0 \}} \right)dtd\text{Vol}^{d-1}(x_0) \\
				&=&\int_{\mathcal{S}}\int_{-\epsilon_{k,n,\omega}}^{\epsilon_{k,n,\omega}} t\|\dot{\Psi}(x_0)\|\left(\Phi\left(-\frac{t\|\dot{\eta}(x_0)\|}{\sqrt{s_{k,n}^2}} -\frac{ (b(x_0)t_{k,n}(x_0)+\delta^{\top}\dot{\eta}(x_0^t))}{\sqrt{s_{k,n}^2}}\right)-1_{\{t<0\}}\right)dtd\text{Vol}^{d-1}(x_0)\\&&+r_3\\
				&=&\int_{\mathcal{S}}\int_{\mathbb{R}} t\|\dot{\Psi}(x_0)\|\left(\Phi\left(-\frac{t\|\dot{\eta}(x_0)\|}{\sqrt{s_{k,n}^2}} -\frac{ (b(x_0)t_{k,n}(x_0)+\delta^{\top}\dot{\eta}(x_0))}{\sqrt{s_{k,n}^2}}\right)-1_{\{t<0\}}\right)dtd\text{Vol}^{d-1}(x_0)\\&&+r_3+r_4\\
				&=& B_1s_{k,n}^2+\int_{S}\frac{\|\dot{\Psi}(x_0)\|}{\|\dot{\eta}(x_0)\|^2} (b(x_0)t_{k,n}(x_0)+\delta^{\top}\dot{\eta}(x_0))^2d\text{Vol}^{d-1}(x_0) +r_3+r_4.
			\end{eqnarray*}
		The last step follows Proposition \ref{S.1}. For the small order terms,
			\begin{eqnarray*}
				r_3&=&\int_{\mathcal{S}} \int_{-\epsilon_{k,n,\omega}}^{\epsilon_{k,n,\omega}} t\|\dot{\Psi}(x_0)\|\bigg(\Phi\left( \frac{ k\mathbb{E}(1/2-Y_1)}{\sqrt{kVar(Y_1)}}\bigg|\delta\right)\\&&\qquad\qquad\qquad-\Phi\left(-\frac{t\|\dot{\eta}(x_0)\|}{\sqrt{s_{k,n}^2}} -\frac{ (b(x_0)t_{k,n}(x_0)+\delta^{\top}\dot{\eta}(x_0^t))}{\sqrt{s_{k,n}^2}}\right) \bigg)dtd\text{Vol}^{d-1}(x_0)\\
				&=& O(\epsilon_{k,n,\omega}^3).
			\end{eqnarray*}

	Through definition of $\epsilon_{k,n,\omega}$ we have
			\begin{eqnarray*}
				r_4&=&o(1/k^{3/2}).
			\end{eqnarray*}
    
    Finally we take expectation on $\delta$:
    \begin{eqnarray*}
    && \mathbb{E}_{\delta}\int_{S}\frac{\|\dot{\Psi}(x_0)\|}{\|\dot{\eta}(x_0)\|^2} (b(x_0)t_{k,n}(x_0)+\delta^{\top}\dot{\eta}(x_0))^2d\text{Vol}^{d-1}(x_0)\\
    &=&\int_{S}\frac{\|\dot{\Psi}(x_0)\|}{\|\dot{\eta}(x_0)\|^2} \mathbb{E}_{\delta}(b(x_0)t_{k,n}(x_0)+\delta^{\top}\dot{\eta}(x_0))^2d\text{Vol}^{d-1}(x_0)\\
    &=& \int_{S}\frac{\|\dot{\Psi}(x_0)\|}{\|\dot{\eta}(x_0)\|^2}\left(b(x_0)^2t_{k,n}^2(x_0) +2b(x_0)t_{k,n}(x_0)\mathbb{E}_{\delta}(\delta^{\top}\dot{\eta}(x_0))+\mathbb{E}_{\delta}(\delta^{\top}\dot{\eta}(x_0))^2 \right)d\text{Vol}^{d-1}(x_0)\\
    &=& \int_{S}\frac{\|\dot{\Psi}(x_0)\|}{\|\dot{\eta}(x_0)\|^2}\left(b^2(x_0)t_{k,n}^2(x_0) + \frac{\|\dot{\eta}(x_0)\|^2}{d}\omega^2 \right)d\text{Vol}^{d-1}(x_0).
    \end{eqnarray*}
    
	\end{proof}
	\subsection{Theorem 2}
	
\begin{proof}[Proof of Theorem 2]
When $|\eta(x)-1/2|<C\omega$ for some large constant $C>0$, $g$ and $\widetilde{g}$ will always be the same, thus
\begin{eqnarray}
&&P(\widetilde{g}(\widetilde{x})\neq Y)- P(g(x)\neq Y)\\
&=& \mathbb{E}_{\delta}\left[\int_{\mathcal{S}}\int_{-C\omega}^{C\omega}t\|\dot{\Psi}(x_0)\|\left(1_{\{\widetilde{\eta}(x_0^t+\delta)<1/2\} }- 1_{\{t<0 \}}\right)dtd\text{Vol}^{d-1}(x_0) \right]+O(\omega^4).
\end{eqnarray}
Moreover,
\begin{eqnarray}
\widetilde{\eta}(\widetilde{x})=\mathbb{E}(\eta(x)|\widetilde{x}\text{ is observed})=\eta(\widetilde{x})+b(\widetilde{x})\omega^2 + O(\omega^3).
\end{eqnarray}
As a result,
\begin{eqnarray}
\widetilde{\eta}(x_0^t+\delta)=\eta(x_0)+t\|\dot{\eta}(x_0)\|+\dot{\eta}(x_0)^{\top}\delta+b(x_0)\omega^2+O(t\omega^2)+O(\omega^3).
\end{eqnarray}
Plugging in $\widetilde{\eta}(x_0^t+\delta)$ into regret, we obtain that
\begin{eqnarray}
&&P(\widetilde{g}(\widetilde{x})\neq Y)- P(g(x)\neq Y)\\
&=& \mathbb{E}_{\delta}\left[\int_{\mathcal{S}}\int_{-C\omega}^{C\omega}t\|\dot{\Psi}(x_0)\|\left(1_{\{\widetilde{\eta}(x_0^t+\delta)<1/2\} }- 1_{\{t<0 \}}\right)dtd\text{Vol}^{d-1}(x_0) \right]+O(\omega^4)\\
&=& \mathbb{E}_{\delta}\left[\int_{\mathcal{S}}\int_{-C\omega}^{C\omega}t\|\dot{\Psi}(x_0)\|\left(1_{\{t<-\dot{\eta}(x_0)^{\top}\delta/\|\dot{\eta}(x_0)\|-b(x_0)\omega^2\} }- 1_{\{t<0 \}}\right)dtd\text{Vol}^{d-1}(x_0) \right]+O(\omega^4)\\
&=& \mathbb{E}\left[\int_{\mathcal{S}}\|\dot{\Psi}(x_0)\|\int_{-\dot{\eta}(x_0)^{\top}\delta/\|\dot{\eta}(x_0)\|-b(x_0)\omega^2 }^{0} tdtd\text{Vol}^{d-1}(x_0)  \right]+O(\omega^4)\\
&=&\int_{\mathcal{S}}\|\dot{\Psi}(x_0)\| \frac{\omega^2}{2d}d\text{Vol}^{d-1}(x_0)+O(\omega^4).
\end{eqnarray}
From this derivation, the dominant terms in the denominator and numerator of the quantity
\begin{eqnarray}\label{eqn:compare}
    \frac{P(Y\neq\widehat{g}_{n}(\widetilde{x}))-P(Y\neq \widetilde{g}(\widetilde{x}))}{ P(Y\neq\widehat{g}_{n}'(\widetilde{x}))-P(Y\neq \widetilde{g}(\widetilde{x})) }
    \end{eqnarray}
are both $\Theta(n^{-4/(d+4)})$ when $k$'s are chosen to be optimal respectively. Note that the multiplicative constants for numerator and denominator are both determined by $\delta$ and density of $X$, and converges to each other when $\omega\rightarrow0$. As $\omega\rightarrow 0$ when $n\rightarrow \infty$,  the difference on the densities vanishes, thus (\ref{eqn:compare}) converges to 1.
\end{proof}
	\subsection{Theorem S.1}
	\begin{proof}[Proof of Theorem S.1] The proof is similar with Theorem 1. Since the format of $r_1$ to $r_4$ are unchanged, one can show that they are small order terms in Theorem 3 as well. What is changed in the proof of Theorem 3 is ${\mu}_{k,n,\omega}(x)$:
		
		When $t<0$, we have
			\begin{eqnarray*}
				{\mu}_{k,n,\omega}(x_0^t)= \eta(x_0)+t\|\dot{\eta}(x_0)\|+\omega\|\dot{\eta}(x_0)\|+b(x_0)t_{k,n}(x)+o,
			\end{eqnarray*}
			while for $t>0$, 	\begin{eqnarray*}
				{\mu}_{k,n,\omega}(x_0^t) = \eta(x_0)+t\|\dot{\eta}(x_0)\|-\omega\|\dot{\eta}(x_0)\|+b(x_0)t_{k,n}(x)+o.
			\end{eqnarray*}
			
		Therefore,
		\begin{eqnarray*}
		&&\int_{\mathcal{S}} \int_{-\epsilon_{k,n,\omega}}^{\epsilon_{k,n,\omega}} t\|\dot{\Psi}(x_0)\|\left( \Phi\left( \frac{ k\mathbb{E}(1/2-Y_1)}{\sqrt{kVar(Y_1)}}\right)-1_{\{t<0 \}} \right)dtd\text{Vol}^{d-1}(x_0)\\
&=&	\int_{\mathcal{S}}\int_{\mathbb{R}} t\|\dot{\Psi}(x_0)\|\left(\Phi\left(-\frac{t\|\dot{\eta}(x_0)\|-sign(t)\omega\|\dot{\eta}(x_0)\|}{\sqrt{s_{k,n}^2}} -\frac{ b(x_0)t_{k,n}(x_0))}{\sqrt{s_{k,n}^2}}\right)-1_{\{t<0\}}\right)dtd\text{Vol}^{d-1}(x_0)+o\\
&=& \int_{\mathcal{S}}\int_{\mathbb{R}} t\|\dot{\Psi}(x_0)\|\left(\Phi\left(-\frac{t\|\dot{\eta}(x_0)\|+\omega\|\dot{\eta}(x_0)\|}{\sqrt{s_{k,n}^2}} -\frac{ b(x_0)t_{k,n}(x_0))}{\sqrt{s_{k,n}^2}}\right)-1_{\{t<0\}}\right)dtd\text{Vol}^{d-1}(x_0)+r_5+o\\
&=&\frac{B_1}{4k}+\frac{1}{2}\int_{S}\frac{\|\dot{\Psi}(x_0)\|}{\|\dot{\eta}(x_0)\|^2} \left(b(x_0)\mathbb{E}R_1(x)^2+\omega\|\dot{\eta}(x_0)\|\right)^2 d\text{Vol}^{d-1}(x_0)+ r_5+o.
		\end{eqnarray*}
		The remainder $r_5$ is not a small order term, but we can show that it is positive, and calculate its rate.
		\begin{eqnarray*}
		r_5=O\left(\frac{B_1}{4k}+\int_{S}\frac{\|\dot{\Psi}(x_0)\|}{\|\dot{\eta}(x_0)\|^2} \left(b(x_0)\mathbb{E}R_1(x)^2+\omega\|\dot{\eta}(x_0)\|\right)^2 d\text{Vol}^{d-1}(x_0)\right).
		\end{eqnarray*}
		For $r_5$,
		\begin{eqnarray*}
		r_5&=& \int_{\mathcal{S}}\int_0^{+\infty} t\|\dot{\Psi}(x_0)\|\Phi\left(-\frac{t\|\dot{\eta}(x_0)\|-\omega\|\dot{\eta}(x_0)\|}{\sqrt{s_{k,n}^2}} -\frac{ b(x_0)t_{k,n}(x_0)}{\sqrt{s_{k,n}^2}}\right)dtd\text{Vol}^{d-1}(x_0)\\&&-\int_{\mathcal{S}}\int_0^{+\infty} t\|\dot{\Psi}(x_0)\|\Phi\left(-\frac{t\|\dot{\eta}(x_0)\|+\omega\|\dot{\eta}(x_0)\|}{\sqrt{s_{k,n}^2}} -\frac{ b(x_0)t_{k,n}(x_0)}{\sqrt{s_{k,n}^2}}\right)dtd\text{Vol}^{d-1}(x_0)\\
		&=& \int_{\mathcal{S}}\int_{-2\omega}^{+\infty} (t+2\omega)\|\dot{\Psi}(x_0)\|\Phi\left(-\frac{t\|\dot{\eta}(x_0)\|+\omega\|\dot{\eta}(x_0)\|}{\sqrt{s_{k,n}^2}} -\frac{ b(x_0)t_{k,n}(x_0)}{\sqrt{s_{k,n}^2}}\right)dtd\text{Vol}^{d-1}(x_0)\\&&-\int_{\mathcal{S}}\int_0^{+\infty} t\|\dot{\Psi}(x_0)\|\Phi\left(-\frac{t\|\dot{\eta}(x_0)\|+\omega\|\dot{\eta}(x_0)\|}{\sqrt{s_{k,n}^2}} -\frac{ b(x_0)t_{k,n}(x_0)}{\sqrt{s_{k,n}^2}}\right)dtd\text{Vol}^{d-1}(x_0)\\
		 &=&  \int_{\mathcal{S}}\int_{0}^{\infty} 2\omega\|\dot{\Psi}(x_0)\|\Phi\left(-\frac{t\|\dot{\eta}(x_0)\|+\omega\|\dot{\eta}(x_0)\|}{\sqrt{s_{k,n}^2}} -\frac{ b(x_0)t_{k,n}(x_0)}{\sqrt{s_{k,n}^2}}\right)dtd\text{Vol}^{d-1}(x_0)\\
		 &&+ \int_{\mathcal{S}}\int_{-2\omega}^0 (t+2\omega)\|\dot{\Psi}(x_0)\|\Phi\left(-\frac{t\|\dot{\eta}(x_0)\|+\omega\|\dot{\eta}(x_0)\|}{\sqrt{s_{k,n}^2}} -\frac{ b(x_0)t_{k,n}(x_0)}{\sqrt{s_{k,n}^2}}\right)dtd\text{Vol}^{d-1}(x_0)\\
		 &:=& A+B.
		\end{eqnarray*}
		From the format of $A$ and $B$, we know that they are positive. When $t_{k,n}(x_0)$ and $1/\sqrt{k}$ both $\ll \omega$, $A$ is an exponential tail (so we just ignore it) and for $B$ we have:
		\begin{eqnarray*}
		B = \int_{\mathcal{S}} \|\dot{\Psi}(x_0)\|\omega^2 d\text{Vol}^{d-1}(x_0) + O(\omega t_{k,n}(x_0)+\omega/\sqrt{k}).
		\end{eqnarray*}
		
	\end{proof}

	\subsection{Theorem 4}
	\begin{proof}[Proof of Theorem 4]
	First, it is easy to know that $\omega=O((1/n)^{1/d})$ since the nearest neighbor has an average distance of $O((1/n)^{1/d})$. 
	
	Second, there is a difference between pre-processed 1NN and random perturbation: in pre-processed 1NN, the nearest neighbor distributes approximately uniformly around $x$, while the other neighbors should have a distance to $x$ larger than the nearest neighbor. However, this difference only affects the remainder term of regret, i.e., assuming whether or not the other neighbors are uniformly distributed in the ball $B(x,R_{k+1})$ does not affect our result.

	As a result, taking expectation on the direction of $\delta$,
	\begin{eqnarray*}
	 && \mathbb{E}\int_{S}\frac{\|\dot{\Psi}(x_0)\|}{\|\dot{\eta}(x_0)\|^2} (b(x_0)t_{k,n}(x_0)+\delta^{\top}(x_0)\dot{\eta}(x_0))^2d\text{Vol}^{d-1}(x_0)\\
	&=& \int_{S}\frac{\|\dot{\Psi}(x_0)\|}{\|\dot{\eta}(x_0)\|^2} \mathbb{E}(b(x_0)t_{k,n}(x_0)+\delta^{\top}(x_0)\dot{\eta}(x_0))^2d\text{Vol}^{d-1}(x_0)\\
	&=&\int_{S}\frac{\|\dot{\Psi}(x_0)\|}{\|\dot{\eta}(x_0)\|^2}\left(b^2(x_0)t_{k,n}^2(x_0)  \right)d\text{Vol}^{d-1}(x_0) +\Theta(\omega^2).
	\end{eqnarray*}
	When $n^{-1/d}\gg n^{-2/(4+d)}$, i.e. $d > 4$, the dominant part of regret becomes $n^{-2/d}$.
	\end{proof}

	\section{Regret Convergence under General Smoothness Condition and Margin Condition}\label{sec:general}
	\subsection{Model and Theorem}
	
	In this section, we will
	relax the conditions on the distribution of $X$ and smoothness of $\eta$, and as a consequence, we only obtain the rate of the regret (without explicit form for the multiplicative constant). Technically, we will adopt the framework of \cite{CD14}, and the following assumptions on the smoothness of $\eta$ and the density of $X$ are used instead of conditions [A.1]-[A.3].
	
	\begin{enumerate}
	\item[B.1] Let $\lambda$ be the Lebesgue measure on $\mathbb{R}^d$. There exists a positive pair $(c_0,r_0)$ such that for any $x\in\mathcal{X}$,
	\begin{equation*}
	\lambda(\mathcal{X}\cap B(x,r))\geq c_0\lambda(B(x,r)),
	\end{equation*}
	for any $0<r\leq r_0$.
	\item[B.2] The support of $X$ is compact.
	\item[B.3] Margin condition:  $P(0<|\eta(x)-1/2|<t)\leq Bt^{\beta}$.
	\item[B.4] Smoothness of $\eta$: there exist some $\alpha>0$ and $c_r>0$, such that $|\eta(x+r)-\eta(x)|\leq \|r\|^{\alpha}$ for any $x$ and $r\leq c_r$.
	\item[B.5] The density of $X$ is finite and bounded away from 0.
\end{enumerate}
\begin{remark}
	The assumption B.3 is weaker from \cite{CD14} ($P(|\eta(x)-1/2|<t)\leq Bt^{\beta}$), but in fact does not affect the convergence.
	
	In \cite{CD14}, the assumption of smoothness is made on $|\mathbb{E}(\eta(x')|x'\in B(x,r))-\eta(x)|$, which is a weaker assumption compared with our B.4. However, under either random perturbation or adversarial attack, given a direction $\delta$ to obtain $\widetilde{x}$, the assumption in \cite{CD14} cannot be simply applied.
\end{remark}

The following theorem provide a general upper bound of regret for both perturbed and attacked data:

		\begin{theorem}[Convergence of Regret]\label{thm:general}
		Under [B.1] to [B.5], if for some $\delta>0$, $k/n^{\delta}\rightarrow \infty$, taking
		\begin{eqnarray*}
		k&\asymp& O(n^{2\alpha/(2\alpha+d)}\wedge (n^{2\alpha/d}\omega^{-2\alpha\beta})^{1/(2\alpha/d+\beta+1)}),
		\end{eqnarray*}
		the regret becomes
		\begin{eqnarray*}
		\mbox{Regret}(n,\omega)=O\left( \omega^{\alpha(\beta+1)}\vee n^{-\alpha(\beta+1)/(2\alpha+d)}\right),
		\end{eqnarray*}
		where $n^{-\alpha(\beta+1)/(2\alpha+d)}$ is the minimax rate of regret in $k$-NN.
		 
	\end{theorem}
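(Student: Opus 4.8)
The plan is to run the Chaudhuri--Dasgupta \cite{CD14} regret argument while carrying along the extra $O(\omega^{\alpha})$ bias introduced by the perturbation, and to observe that the adversarial case reduces to the same estimate. First I would write the excess risk in the pointwise form
\[
\mbox{Regret}(k,n,\omega)=\mathbb{E}_{X,\delta}\Big[\,|2\eta(X)-1|\cdot P\big(\widehat g_{n,k}(X+\delta)\neq g(X)\,\big|\,X,\delta\big)\,\Big],
\]
the inner probability being over the training sample. As in Step~1 of the proof of Theorem~1, conditioning on $R_{k+1}(X+\delta)$ makes the $k$ retained neighbours i.i.d., so $\widehat\eta_{k,n}(X+\delta)$ is an average of $k$ variables in $[0,1]$ whose conditional mean is the local average $\bar\eta(X+\delta)$ of $\eta$ over a ball of radius $\Theta((k/n)^{1/d})$ about $X+\delta$ (using B.1 and B.5 for the uniform control of that radius). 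Applying B.4 twice---to $X+\delta$ against its neighbours and to $X+\delta$ against $X$, for $\omega\le c_r$, the same bound also holding when $X+\delta$ is the adversarial target in $B(X,\omega)$---gives $|\bar\eta(X+\delta)-\eta(X)|\le A\big(\omega^{\alpha}+(k/n)^{\alpha/d}\big)=:B_n$, whence a Hoeffding/Bernstein bound yields
\[
P\big(\widehat g_{n,k}(X+\delta)\neq g(X)\,\big|\,X,\delta\big)\ \le\ 2\exp\!\big(-c\,k\,(|\eta(X)-1/2|-B_n)_{+}^{2}\big)
\]
modulo the exponentially small correction for the event $R_{k+1}\gg(k/n)^{1/d}$; the hypothesis that $k$ diverges at a polynomial rate in $n$ is exactly what makes this correction, and the Berry--Esseen-type remainders, negligible.

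Next I would apply the standard margin argument, peeling the support by the value of $\Delta:=|\eta(X)-1/2|$: on $\{0<\Delta\le\delta_0\}$ with $\delta_0\asymp B_n+k^{-1/2}$, bound the misclassification probability by $1$ and invoke $P(0<\Delta<\delta_0)\le B\delta_0^{\beta}$; on $\{\Delta>\delta_0\}$ use the Gaussian tail $e^{-ck(\Delta-B_n)^{2}}\le e^{-ck\Delta^{2}/4}$ together with a dyadic decomposition again controlled by B.3. This yields
\[
\mbox{Regret}(k,n,\omega)\ \lesssim\ B_n^{\beta+1}+k^{-(\beta+1)/2}\ \asymp\ \omega^{\alpha(\beta+1)}+(k/n)^{\alpha(\beta+1)/d}+k^{-(\beta+1)/2}
\]
(the set $\{\eta(X)=1/2\}$ carries weight $|2\eta(X)-1|=0$ and is irrelevant, which is why the ``$0<$'' in B.3 costs nothing). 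Finally I optimise over $k$: balancing the last two terms gives the clean-data choice $k\asymp n^{2\alpha/(2\alpha+d)}$ and the rate $n^{-\alpha(\beta+1)/(2\alpha+d)}$, whereas in the regime where $\omega^{\alpha(\beta+1)}$ dominates it suffices that the $k$-NN-bias and variance terms be $\lesssim\omega^{\alpha(\beta+1)}$, which the smaller value $k\asymp\big(n^{2\alpha/d}\omega^{-2\alpha\beta}\big)^{1/(2\alpha/d+\beta+1)}$ achieves; taking the minimum of the two reproduces the stated $k$ and $\mbox{Regret}(n,\omega)=O\big(\omega^{\alpha(\beta+1)}\vee n^{-\alpha(\beta+1)/(2\alpha+d)}\big)$. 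That $n^{-\alpha(\beta+1)/(2\alpha+d)}$ is the minimax rate is the classical Audibert--Tsybakov bound \cite{audibert2007fast} for the $\omega=0$ problem and needs no separate proof here.

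The bulk of the work---and the part I expect to be most delicate---is the uniform concentration machinery rather than the algebra: one must show, uniformly over $x\in\mathcal X$ and over the perturbation direction (or over every adversarial target in $B(x,\omega)$), that the $k$ retained neighbours lie within $\Theta((k/n)^{1/d})$ of the query point with the correct high probability, that the i.i.d.\ reduction after conditioning on $R_{k+1}$ is legitimate, and that the non-uniform Berry--Esseen correction is $o(k^{-3/2})$ on the relevant tube---precisely the ingredients imported from \cite{CD14} and from the proof of Theorem~1. The genuinely new point, easy once that is in place, is that B.4 forces the perturbation to contribute only a \emph{uniform} $O(\omega^{\alpha})$ term to the bias, so it decouples cleanly from the usual bias--variance trade-off and reduces the choice of $k$ to the two-regime calculation above.
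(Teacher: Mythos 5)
Your proposal is correct and follows essentially the same route as the paper's proof: both run the Chaudhuri--Dasgupta framework, writing the regret as $\mathbb{E}\big[|2\eta(X)-1|\,P(\widehat g\neq g)\big]$, bounding the perturbation-plus-neighborhood bias by $\Delta_0=O(\omega^{\alpha}+(k/n)^{\alpha/d})$ via B.4, applying Bernstein concentration with a dyadic peeling over the margin levels to obtain $O(\Delta_0^{\beta+1})+O(k^{-(\beta+1)/2})$, and then optimizing $k$ across the two regimes. The only cosmetic difference is your invocation of Berry--Esseen remainders, which the paper's proof of this theorem does not need (it relies on Bernstein plus the exponential tail for $R_{k+1}$ alone).
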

	
	
	Theorem \ref{thm:general} also reveals a sufficient condition when $k$-NN is consistent, i.e regret finally converges to 0: for both perturbed and attacked data, when $\omega=o(1)$, $k$-NN is still consistent using these two types of corrupted testing data.
	
	\begin{theorem}[Minimax Rate of Regret]\label{thm:lower}
	Let $\widehat{g}_n$ be an estimator of $g$, let $\mathcal{P}_{\alpha,\beta}$ be a set of distributions which satisfy [B.1] to [B.5], when $\alpha\leq 1$, there exists some $C>0$ such that
	\begin{equation}
	    \sup\limits_{P\in\mathcal{P}_{\alpha,\beta} } P(\widehat{g}_n(\widetilde{X})\neq Y)-P(g(X)\neq Y) \geq C (\omega^{\alpha(\beta+1)}\vee n^{-\frac{\alpha(\beta+1)}{2\alpha+d}}).
	\end{equation}
	The constant $C$ depends on $\alpha,\beta,d$ only.
	\end{theorem}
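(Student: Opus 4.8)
The plan is to prove the two terms in the maximum separately and then combine them. Since the supremum over $\mathcal P_{\alpha,\beta}$ of a quantity that is bounded below by $c_1 a$ on one sub-family and by $c_2 b$ on another is bounded below by $\min(c_1,c_2)(a\vee b)$, it suffices to exhibit (i) a single distribution $P^\star\in\mathcal P_{\alpha,\beta}$ on which every estimator incurs excess risk $\gtrsim\omega^{\alpha(\beta+1)}$ (the ``corruption floor'', independent of $n$), and (ii) a finite sub-family on which every estimator incurs average excess risk $\gtrsim n^{-\alpha(\beta+1)/(2\alpha+d)}$ (the classical rate), and then note that whenever $\omega>n^{-1/(2\alpha+d)}$ the floor already exceeds the classical rate and conversely.

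For (i) I would use that the $n$ training samples are independent of the test triple $(X,Y,\widetilde X)$: since $\widehat g_n(\widetilde X)$ is a function of the training data and of $\widetilde X$ only, conditioning on the training data leaves the law of $Y$ given $\widetilde X$ unchanged, so $P^\star(\widehat g_n(\widetilde X)\neq Y)\ge P^\star(\widetilde g(\widetilde X)\neq Y)$ where $\widetilde g$ is the Bayes rule for $Y$ given $\widetilde X$ under $P^\star$; thus the excess risk at $P^\star$ is at least the perturbed Bayes regret $P^\star(Y\neq\widetilde g(\widetilde X))-P^\star(Y\neq g(X))$. It then remains to build one $P^\star$ with perturbed Bayes regret $\gtrsim\omega^{\alpha(\beta+1)}$. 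I would take $X\sim\mathrm{Unif}([0,1]^d)$ (which verifies [B.1], [B.2], [B.5]) and $\eta$ assembled from $m\asymp\omega^{\alpha\beta-d}$ parallel strips of width $\asymp\omega$ on which $\eta=1/2\pm c\,\omega^{\alpha}$, with alternating sign on consecutive strips and $\eta$ flattened to stay bounded away from $1/2$ elsewhere (working, as is standard, in the regime $\alpha\beta\le d$). The near-boundary profile $|\eta-1/2|\asymp(\text{distance to }\{\eta=1/2\})^{\alpha}$ is realizable by an $\alpha$-H\"older function precisely because $\alpha\le1$ --- for $\alpha>1$ condition [B.4] would force $\eta$ constant. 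The strip interiors have total mass $\asymp m\omega^d\asymp\omega^{\alpha\beta}$, [B.3] holds with the prescribed $\beta$ (the strip spacing also gives the matching lower bound on near-boundary mass), and when $\widetilde x=x+\omega u$ with $u$ uniform on the sphere leaves the strip containing $x$ --- an event of probability bounded below by a dimensional constant --- the posterior mean $\mathbb E[\eta(X)\mid\widetilde X=\widetilde x]$ is within $o(\omega^{\alpha})$ of $1/2$, so the conditional regret of $\widetilde g$ there is $\gtrsim\omega^{\alpha}$; integrating over the $\asymp\omega^{\alpha\beta}$ mass gives $\gtrsim\omega^{\alpha(\beta+1)}$.

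For (ii), assuming $\omega\le n^{-1/(2\alpha+d)}$, I would run the standard Assouad/hypercube construction in the style of Audibert--Tsybakov: partition $[0,1]^d$ into cells of side $\rho\asymp n^{-1/(2\alpha+d)}$, select $m\asymp\rho^{\alpha\beta-d}$ of them, place on the $j$-th selected cell a smooth bump with $\eta=1/2+\sigma_j c\rho^{\alpha}$, $\sigma_j\in\{\pm1\}$, and keep $\eta$ away from $1/2$ off the bumps so that [B.1]--[B.5] hold for the whole $2^m$-indexed family with the right $\beta$. Because $\rho$ is chosen so that $n\rho^{d+2\alpha}\asymp1$, two hypotheses differing in a single sign have Hellinger affinity bounded below along that coordinate under $n$ \emph{uncorrupted} samples, and because $\omega\le\rho/2$ the perturbed test point lies in the same cell as $X$ with probability bounded below, so correctly classifying $\widetilde X$ on that cell still requires recovering $\sigma_j$. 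Assouad's lemma then yields average excess risk $\gtrsim m\rho^{d}\rho^{\alpha}=\rho^{\alpha(\beta+1)}\asymp n^{-\alpha(\beta+1)/(2\alpha+d)}$.

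The routine parts are the Hellinger/Assouad estimates and the margin/volume bookkeeping. The hard part will be the corruption-floor construction of step (i): one must simultaneously keep $\eta$ globally $\alpha$-H\"older while its boundary slope is as steep as $\mathrm{dist}^{\alpha}$, realize \emph{exactly} the margin exponent $\beta$ (including the matching lower bound on near-boundary mass, which is what supplies the factor $\omega^{\alpha\beta}$), and --- most delicately --- arrange the strip geometry so that averaging $\eta$ over the perturbation sphere $\partial B(\widetilde x,\omega)$ genuinely destroys the sign information rather than partially preserving it, as it would for an isolated radially symmetric bump; alternating signs on parallel strips of width comparable to $\omega$ is the mechanism I would use, and showing the resulting posterior is confused on a set of the correct size is the crux. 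A final pass tracks constants to confirm that $C$ depends on $(\alpha,\beta,d)$ only.
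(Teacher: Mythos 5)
Your proposal is correct and follows essentially the same route as the paper: the same two-regime split, the same Assouad/hypercube construction at grid scale $n^{-1/(2\alpha+d)}$ for the statistical term, and the same mechanism of adjacent oppositely-labeled regions at scale $\omega$ (your alternating strips versus the paper's paired cells $(\mathcal{X}_{j0},\mathcal{X}_{j1})$) whose shared boundary confuses the posterior given $\widetilde{X}$ for the corruption term. Your explicit reduction of the corruption floor to the perturbed Bayes risk via independence of training and test data is a cleaner statement of what the paper argues implicitly, but it is not a different proof.
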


	Theorem \ref{thm:lower} reveals that, for any estimator of $g$, under either random perturbation or adversarial attack, the regret in the worst case is larger than $C(\omega^{\alpha(\beta+1)}\vee n^{-\frac{\alpha(\beta+1)}{2\alpha+d}})$. Theorem \ref{thm:general} and \ref{thm:lower} together shows that the kNN estimator reaches the optimal rate of regret.
	\subsection{Proofs}
	\begin{proof}[Proof of Theorem S.\ref{thm:general}]
		Let $p=k/n$. Denote $R_{k,n}(x)=P(\widehat{g}_{k,n}(x)\neq Y|x)$ and $R^*(x)=P(g(x)\neq Y)$, and $\mathbb{E}R_{k,n}(x)-R^*(x)$ as the excess risk. Define
		\begin{align*}
		\mathcal{X}^+_{p,\Delta,\omega}=\{x\in\mathcal{X}| \eta(x)>\frac{1}{2}, \forall x'\in B(x,\omega), {\eta}(x'+r)\geq &\frac{1}{2}+\Delta ,\forall \|r\|<r_{2p}(x)\},\\
		\mathcal{X}^-_{p,\Delta,\omega}=\{x\in\mathcal{X}|\eta(x)<\frac{1}{2},\forall x'\in B(x,\omega), {\eta}(x'+r)\leq &\frac{1}{2}-\Delta , \forall \|r\|<r_{2p}(x)\},
		\end{align*}
		with $r_{2p}$ as the distance from $x$ to its $2pn$th nearest neighbor, and the decision boundary area:
		\begin{equation*}
		\partial_{p,\Delta,\omega}=\mathcal{X}\setminus(\mathcal{X}^+_{p,\Delta,\omega}\cup\mathcal{X}^-_{p,\Delta,\omega}).
		\end{equation*}
		
		Given $	\partial_{p,\Delta,\omega}$, $\mathcal{X}^+_{p,\Delta,\omega}$, and $\mathcal{X}^-_{p,\Delta,\omega}$, similar with Lemma 8 in \cite{CD14}, the event of $g(x)\neq \widehat{g}_{k,n}(x)$ can be covered as:
		\begin{eqnarray*}
			1_{\{ g(x)\neq \widehat{g}_{k,n}(x) \}}&\leq& 1_{\{ x\in \partial_{p,\Delta,\omega}\}}\\
			&&+1_{\{  \max\limits_{i=1,...,k}R_i(\widetilde{x})\geq r_{2p}(x) \}}\\
			&&+1_{\{ |\widehat{\eta}_{k,n}(x)-\eta(x'+r)|\geq \Delta \}}.
		\end{eqnarray*}
		
		When ${\eta}(x'+r)>1/2$ for all $\|r\|\leq r_{2p}(x)$, and $x\in\mathcal{X}_{p,\Delta}^+$,  assume  $\widehat{\eta}_{k,n}(x)<1/2$, then  $${\eta}(x'+r)-\widehat{\eta}_{k,n}(x') >{\eta}(x'+r)-1/2\geq \Delta.$$
		The other two events are easy to figure out. 
		
		By \cite{CD14} and \cite{belkin2018overfitting}, $P(\max\limits_{i=1,...,k}R_i(x)\geq r_{2p}(x) )$ is of $O(\exp(-ck^2))$ for some $c>0$, hence it becomes a smaller order term if for some $\delta>0$, $k/n^{\delta}\rightarrow \infty$.
		
		In addition, from the definition of regret, assume $\eta(x)<1/2$, 
		\begin{eqnarray*}
			&&P(\widehat{g}(x)\neq Y|X=x)-\eta(x)\\&=& \eta(x)P(\widehat{g}(x)=0|X=x) + (1-\eta(x))P(\widehat{g}(x)=1|X=x) -\eta(x)\\
			&=&\eta(x)P(\widehat{g}(x)=g(x)|X=x)+ (1-\eta(x))P(\widehat{g}(x)\neq g(x)|X=x)-\eta(x)\\
			&=& \eta(x)-\eta(x)P(\widehat{g}(x)\neq g(x)|X=x)+ (1-\eta(x))P(\widehat{g}(x)\neq g(x)|X=x)-\eta(x)\\
			&=& (1-2\eta(x))P(\widehat{g}(x)\neq g(x)|X=x),
		\end{eqnarray*}
		similarly, when $\eta(x)>1/2$, we have
		\begin{eqnarray*}
			P(\widehat{g}(x)\neq Y|X=x)-1+\eta(x)&=& (2\eta(x)-1)P(\widehat{g}(x)\neq g(x)|X=x).
		\end{eqnarray*}
		As a result, the regret can be represented as
		\begin{eqnarray*}
			Regret(k,n,\omega) &=& \mathbb{E}\left( |1-2\eta(X)|P(g(X)\neq \widehat{g}_{k,n}(X)) \right).
		\end{eqnarray*}
		For simplicity, denote $p=k/n$. We then follow the proof of Lemma 20 of \cite{CD14}. Without loss of generality assume $\eta(x)>1/2$. For perturbation $\delta\in\mathbb{R}^d$, define
		\begin{eqnarray*}
			\Delta_0&=& \sup\limits_{x,\delta,\|r\|<r_{2p}(x)}|\eta(x+\delta+r)-\eta(x)|=O(\omega^{\alpha})+O((k/n)^{\alpha/d}),\\
			\Delta(x)&=&|\eta(x)-1/2|,\nonumber
		\end{eqnarray*}
		then we have $$\eta(x+\delta+r)\geq \eta(x)-\Delta_0=\frac{1}{2}+(\Delta(x)-\Delta_0), $$
		hence $x\in\mathcal{X}_{p,\Delta(x)-\Delta_0,\omega}^{+}$.
		
		From the definition of $R_{k,n}$ and $R^*$, when $\Delta(x)>\Delta_0$, we also have
		\begin{eqnarray*}
			&&\mathbb{E}R_{k,n}(x)-R^*(x)\\&\leq& 2\Delta(x) \bigg[P(r_{(k+1)}>v_{2p})+P\bigg( \sum_{i=1}^k\frac{1}{k}Y(X_{i})-{\eta}(x'+\delta+r)>\Delta(x)-\Delta_0 \bigg)\bigg]\\
			&\leq& 2\Delta(x) P\bigg( \sum_{i=1}^k\frac{1}{k}Y(X_{i})-{\eta}(x'+\delta+r)>\Delta(x)-\Delta_0 \bigg)+o\\
			&=& 2\Delta(x) \mathbb{E}_{\delta}\left[P\bigg( \sum_{i=1}^k\frac{1}{k}Y(X_{i})-{\eta}(x'+\delta+r)>\Delta(x)-\Delta_0 \bigg|\delta\bigg)\right]+o
		\end{eqnarray*}
		Considering the problem that the upper bound can be much greater than 1 when $\Delta(x)$ is small, we define $\Delta_i=2^i\Delta_0$, taking $i_0 = \min\{ i\geq 1|\; (\Delta_i-\Delta_0)^{2}>1/k \}$, using Berstein inequality, it becomes
		
		\begin{eqnarray*}
			\mathbb{E}R_{k,n}(X)-R^*(X)
			&=&\mathbb{E}(R_{k,n}(X)-R^*(X))1_{\{\Delta(X)\leq\Delta_{i_0}\}}\\
			&&+\mathbb{E}(R_{k,n}(X)-R^*(X))1_{\{\Delta(X)>\Delta_{i_0}\}}\\
			&\leq& 2\Delta_{i_0} P(\Delta(X)\leq \Delta_{i_0})+\exp(-k/8)\\
			&&
			+c_2\mathbb{E}\left[\Delta(X)1_{\{\Delta_{i_0}<\Delta(X)\}}  \exp(-c_1k(\Delta(x)-\Delta_0)^2)\right]\\
			&\leq& 2\Delta_{i_0} P(\Delta(X)\leq \Delta_{i_0})+\exp(-k/8)\\
			&&
			+c_2\mathbb{E}\left[\Delta(X)1_{\{\Delta_{i_0}<\Delta(X)\}}  \exp(-c_1k(\Delta(x)-\Delta_0)^2)\right].
		\end{eqnarray*}	
		When $i_0 = \min\{ i\geq 1|\; (\Delta_i-\Delta_0)^{2}>1/k \}$, the exponential tail will diminish fast, leading to 
		\begin{eqnarray*}
		&&\mathbb{E}\left[\Delta(X)1_{\{\Delta_{i_0}<\Delta(X)\}}   \exp(-c_1k(\Delta(x)-\Delta_0)^2)\right]\\&=&\sum_{i=i_0}^{\infty}\mathbb{E}\left[\Delta(X)1_{\{\Delta_{i}<\Delta(X)<\Delta_{i+1}\}}   \exp(-c_1k(\Delta(x)-\Delta_0)^2)\right]\\
		&\leq& \sum_{i=i_0}^{\infty}\Delta_{i+1}^{\beta+1} \exp(-c_1k(\Delta_i-\Delta_0)^2)\\
		&=& \sum_{i=i_0}^{\infty}\Delta_{0}^{\beta+1}2^{(i+1)(\beta+1)} \exp(-c_1k\Delta_0^2(2^{i}-1)^2)\\
		&\leq& c_3\Delta_{0}^{\beta+1}.
		\end{eqnarray*}
		
		Recall that $\Delta_{i_0}>\Delta_0$ and $\Delta_{i_0}^2>1/k$, hence when $\Delta_{i_0}^2=O(1/k)$, we can obtain the minimum upper bound
		\begin{eqnarray*}
			\mathbb{E}R_{k,n}(X)-R^*(X)= O(\Delta_{0}^{\beta+1}) + O\left( \left(\frac{1}{k}\right)^{(\beta+1)/2} \right).
		\end{eqnarray*}
	\end{proof}
	\begin{proof}[Proof of Theorem S.\ref{thm:lower}]
	The proof is similar as \cite{audibert2007fast} using technical details in \cite{audibert2004classification} for Assouad's method.
	    There are two scenarios we will consider. Define $C_0$, $C_1$ and $C_2$ as some suitable constants, we will first show for any $\omega\geq 0$,
	    \begin{equation}
	        \sup\limits_{P\in\mathcal{P}_{\alpha,\beta} } P(\widehat{g}_n(\widetilde{X})\neq Y)-P(g(X)\neq Y) \geq C_1  n^{-\frac{\alpha(\beta+1)}{2\alpha+d}}.
	    \end{equation}
	    Further, when $\omega> C_0n^{-\frac{1}{2\alpha+d}}$, our target is to show that
	    \begin{equation}
	          \sup\limits_{P\in\mathcal{P}_{\alpha,\beta} } P(\widehat{g}_n(\widetilde{X})\neq Y)-P(g(X)\neq Y) \geq C_2 \omega^{\alpha(\beta+1)}.
	    \end{equation}
	\textit{Case 1}: when $\omega\leq C_0n^{-\frac{1}{2\alpha+d}}$, the basic idea is to construct a distribution of $x$ and two distributions of $y|x$ such that, the Bayes classifiers from these two distributions of $y|x$ reverse with each other, but through sampling $n$ points, we cannot distinguish which distribution these $n$ samples chosen are from. For example, given $n$ samples from a normal distribution, statistically we cannot determine whether data are sampled from a zero-mean distribution, or a distribution with mean $1/\sqrt{n}$, thus any estimator based on data (either using clean testing data or corrupted testing data) can make a false prediction.

Assume $X$ distributed within a compact set in $[0,1]^{d}$. For an integer $q\geq 1$, consider the regular grid as
\begin{equation}
    G_q:=\left\{ \left( \frac{2k_1+1}{2q},...,\frac{2k_d+1}{2q}  \right):k_i\in\{ 0,...,q-1 \},i=1,...,d  \right\}.
\end{equation}
For any point $x$, denote $n_q(x)$ as the closest grid point in $G_q$, and define $\mathcal{X}'_1, ..., \mathcal{X}'_{q^d}$ as a partition of $[0,1]^{d}$ such that $x$ and $x'$ are in the same $\mathcal{X}'_i$ if and only if $n_q(x)=n_q(x')$.  Among all the $\mathcal{X}'_i$'s, select $m$ of them as $\mathcal{X}_1,...,\mathcal{X}_m$, and $\mathcal{X}_0:=[0,1]^d\backslash \cup_{i=1}^m\mathcal{X}_i$. 

Take $z_i$ as the center of $\mathcal{X}_i$ for $i=1,...,m$. When $x\in B(z_i,1/4q)$, set the density of $x$ as $\epsilon/\lambda[B(z_i,1/4q)]$ for some $\epsilon>0$, and the density of $x$ in  $\mathcal{X}_i\backslash B(z_i,1/4q)$ is set to be 0. Assume $x$ uniformly distributes in $\mathcal{X}_0$.

Let $u:\mathbb{R}^+\rightarrow\mathbb{R}^+$ be a nonincreasing infinitely differentiable function starting from 0 and satisfying $\alpha$-smoothness condition. Moreover, $u$ is $1$ in $[1/2,\infty)$. Denote $\psi$ and $\phi$ as 
\begin{equation}
    \psi(x):=C_{\psi}u(\|x\|),
\end{equation}
and
\begin{equation}
    \phi(x):=q^{-\alpha}\psi(q(x-n_q(x))).
\end{equation}

Through the above construction, if we take $\eta(x)=(1+\phi(x))/2$ or $\eta(x)=(1-\phi(x))/2$, and let $m=O(q^{  d-\alpha\beta})$, then when $\alpha\beta\leq d$, $\beta$ margin condition is also satisfied.

The construction above will also be applied in \textit{Case 2} (with difference on the choice of $q,\epsilon,u$). 

Now we apply Assouad's method to find the lower bound of regret. Denote $P_{jk}$ as a distribution such that $\eta(x)=(1+\phi(x))/2$ when $k=0$, $x\in\mathcal{X}_j$, and $\eta(x)=(1-\phi(x))/2$ when $k=1$, $x\in\mathcal{X}_j$, then we have for any estimator $\widehat{g}(x,Z_n)$ with $Z_n=(X_n,Y_n)$ as data, 
	\begin{eqnarray}
	&&\sup\limits_{k=0,1}\mathbb{E}_{X,Z_n,P_{jk}}1_{\{ \widehat{g}(X,Z_n)\neq g(X)\}}1_{\{ X\in \mathcal{X}_j \}}\\
&\geq& 	\frac{1}{2}\mathbb{E}_{X,Z_n,P_{j0}}1_{\{ \widehat{g}(X,Z_n)\neq g(X)\}}1_{\{ X\in \mathcal{X}_j \}}+\frac{1}{2}\mathbb{E}_{Z_n,P_{j1}}1_{\{ \widehat{g}(X,Z_n)\neq g(X)\}}1_{\{ X\in \mathcal{X}_j \}}\\
&=& 	\frac{1}{2}\mathbb{E}_{X,Z_n,P_{j0}}1_{\{ \widehat{g}(X,Z_n)\neq 0\}}1_{\{ X\in \mathcal{X}_j \}}+\frac{1}{2}\mathbb{E}_{Z_n,P_{j1}}1_{\{ \widehat{g}(X,Z_n)\neq 1\}}1_{\{ X\in \mathcal{X}_j \}}\\
&=& \frac{1}{2}\mathbb{E}_{X}1_{\{X\in \mathcal{X}_j \}}\mathbb{E}\left[  \mathbb{E}_{Z_n,P_{j0}} 1_{\{ \widehat{g}(x,Z_n)\neq 0 \}} +\mathbb{E}_{Z_n,P_{j1}} 1_{\{ \widehat{g}(x,Z_n)\neq 1 \}}  \bigg|X=x\right]\\
&=& \frac{1}{2}\mathbb{E}_{X}1_{\{X\in \mathcal{X}_j \}}\mathbb{E}\left[  \int 1_{\{ \widehat{g}(x,Z_n)\neq 0 \}}dP_{j0}(Z_n) + \int 1_{\{ \widehat{g}(x,Z_n)\neq 1 \}} dP_{j1}(Z_n)  \bigg|X=x\right] \\
&\geq& \frac{1}{2}\mathbb{E}_{X}1_{\{X\in \mathcal{X}_j \}}\mathbb{E}\left[  \int 1_{\{ \widehat{g}(x,Z_n)\neq 0 \}} +  1_{\{ \widehat{g}(x,Z_n)\neq 1 \}} (dP_{j0}(Z_n) \wedge dP_{j1}(Z_n)) \bigg|X=x\right]\\
&=& \frac{1}{2}\mathbb{E}_{X}1_{\{X\in \mathcal{X}_j \}}\mathbb{E}\left[  \int  (dP_{j0}(Z_n) \wedge dP_{j1}(Z_n)) \bigg|X=x\right]\\
&=& \frac{1}{2}\mathbb{E}_{X}1_{\{X\in \mathcal{X}_j \}} \int  (dP_{j0}(Z_n) \wedge dP_{j1}(Z_n)).
	\end{eqnarray}

Denote
\begin{equation}
    b_j:=\left[ 1-\mathbb{E}^2( \sqrt{1-\phi^2(X)}|X\in\mathcal{X}_j) \right]^{1/2},
\end{equation}
and
\begin{equation}
    b'_j:=(\mathbb{E}\phi(X)|X\in\mathcal{X}_j),
\end{equation}
then $\int  (dP_{j0}(Z_n) \wedge dP_{j1}(Z_n))=\Theta(1)$ through our design of $\mathcal{X}_j$ when $b_j= O(1/\sqrt{n \epsilon})$ by Lemma 5.1 in \cite{audibert2004classification}.

As a result, when $b_j=b$, $b'_j=b'$ for all $j=1,...,m$, and $b= O(1/\sqrt{n\epsilon})$, there exists some $C_3>0$ such that
\begin{eqnarray}
&& \sup\limits_{P\in\mathcal{P}} P(\widehat{g}(X,Z_n)\neq Y)-P(g(X)\neq Y)\\
&=&  \sup\limits_{P\in\mathcal{P}}\mathbb{E}|2\eta(X)-1|P(\widehat{g}(X,Z_n)\neq g(X))\\
&=&  \sup\limits_{P\in\mathcal{P}}\sum_{j=1}^m\mathbb{E}|2\eta(X)-1|P(\widehat{g}(X,Z_n)\neq g(X)) 1_{\{X\in\mathcal{X}_j\}}\\
&\geq& C_3 mb'\epsilon.
\end{eqnarray}

The regret is lower bounded as $C_1n^{-\alpha(\beta+1)/(2\alpha+d)}$ when taking $q=O(n^{1/(2\alpha+d)})$. Note that $\widehat{g}(x,Z_n)$ can be any classifier, which also includes those ``random" estimators when $x$ is perturbed / attacked.
	
	\textit{Case 2}: when $\omega> C_0n^{-\frac{1}{2\alpha+d}}$, we construct a distribution of $(x,y)$ such that, after injecting noise in it, there is some sets of $\tilde{x}$ where $P(g(x)=1|\tilde{x})$ and $P(g(x)=0|\tilde{x})$ are comparable, thus no matter which label is obtained from the estimator, it has a constant-level of probability to make false decision at this $\tilde{x}$.
	
	The construction is similar as \textit{Case 1}, and we take $q=\lfloor 2/\omega\rfloor$. For function $u$, here we let it increase from 0 and becomes 1 in $[1/4,\infty)$. For each pair $(\mathcal{X}_{j0},\mathcal{X}_{j1})$, take $\eta(x)=(1+\phi(x))/2$ when $x\in\mathcal{X}_{j0}$ and $\eta(x)=(1-\phi(x))/2$ when $x\in\mathcal{X}_{j1}$. The support of $x$ is $\mathcal{X}_0\cup (\bigcup_{i=1}^m B(z_i,3\omega/4))$. Take $m=O(\omega^{\alpha\beta-d})$ and $\epsilon=O(\omega^d)$, then both $\alpha$-smoothness condition and $\beta$-margin condition are satisfied.
	
	After injecting random noise on $x$, consider $\xi_j$ as the boundary between $\mathcal{X}_{j0}$ and $\mathcal{X}_{j1}$, then when $\tilde{x}$ is from $\{z\;|\; dist(z,\xi_j)<\omega/4,\;z\in\mathcal{X}_{j0}\cup \mathcal{X}_{j1}  \}$, $P(g(x)=1|\tilde{x})$ and $P(g(x)=0|\tilde{x})$ are in $[C_4,1-C_4]$ for some constant $C_4>0$. Thus the probability of any estimator to make a false decision at this $\tilde{x}$ is larger than $C_4$. In addition, the probability measure of $\cup_{j=1}^m\{z\;|\; dist(z,\xi_j)<\omega/4,\;z\in\mathcal{X}_{j0}\cup \mathcal{X}_{j1}  \}$ is greater than $C_5\omega^{\alpha\beta}$ for some constant $C_5>0$.  Thus the regret is greater than $C_5\omega^{\alpha\beta}C_{\phi}\omega^{\alpha}C_4=C_6\omega^{\alpha(\beta+1)}$.


	\end{proof}
\end{document}